\definecolor{iccvblue}{rgb}{0.21,0.49,0.74}
\title{CAPT: Class-Aware Prompt Tuning for Federated Long-Tailed Learning with Vision-Language Model}
\author{
    Shihao Hou$^{\rm 1}$ \quad
    Xinyi Shang$^{\rm 2}$ \quad
    Shreyank N Gowda$^{\rm 3}$ \quad
    Yang Lu$^{\rm 1}$\thanks{Corresponding Author: Yang Lu (luyang@xmu.edu.cn)}  \quad
    Chao Wu$^{\rm 4}$\quad
    Yan Yan$^{\rm 1}$\quad
    Hanzi Wang$^{\rm 1}$\\
    $^{\rm 1}$Xiamen University\quad
    $^{\rm 2}$University College London\quad
    $^{\rm 3}$University of Nottingham\quad
    $^{\rm 4}$Zhejiang University\\
    {\tt\small houshihao@stu.xmu.edu.cn, xinyi.shang.23@ucl.ac.uk, Shreyank.Narayanagowda@nottingham.ac.uk,}\\
    {\tt\small luyang@xmu.edu.cn, chao.wu@zju.edu.cn, yanyan@xmu.edu.cn, hanzi.wang@xmu.edu.cn}
    }
\newtheorem{theorem}{Theorem}
\newtheorem{corollary}{Corollary}
\newtheorem{assumption}{Assumption}
\newtheorem{definition}{Definition}
\newtheorem{lemma}{Lemma}
\newcommand{\blueup}[1]{\textcolor{RoyalBlue}{$\uparrow$ #1}}
\newcommand{\reddown}[1]{\textcolor{OrangeRed}{$\downarrow$ #1}}
\definecolor{overallblue}{HTML}{8080FF}
\begin{document}
\maketitle

\begin{abstract}
Effectively handling the co-occurrence of non-IID data and long-tailed distributions remains a critical challenge in federated learning. While fine-tuning vision-language models (VLMs) like CLIP has shown to be promising in addressing non-IID data challenges, this approach leads to severe degradation of tail classes in federated long-tailed scenarios. Under the composite effects of strong non-IID data distribution and long-tailed class imbalances, VLM fine-tuning may even fail to yield any improvement. To address this issue, we propose Class-Aware Prompt Learning for Federated Long-tailed Learning (CAPT), a novel framework that leverages a pre-trained VLM to effectively handle both data heterogeneity and long-tailed distributions. CAPT introduces a dual-prompt mechanism that synergizes general and class-aware prompts, enabling the framework to capture global trends while preserving class-specific knowledge. To better aggregate and share knowledge across clients, we introduce a heterogeneity-aware client clustering strategy that groups clients based on their data distributions, enabling efficient collaboration and knowledge sharing. Extensive experiments on various long-tailed datasets with different levels of data heterogeneity demonstrate that CAPT significantly improves tail class performance without compromising overall accuracy, outperforming state-of-the-art methods in federated long-tailed learning scenarios.

\end{abstract}

\section{Introduction}

\begin{figure}[t]
    \centering
    \begin{subfigure}[b]{0.49\columnwidth}
        \centering
        \includegraphics[width=\textwidth]{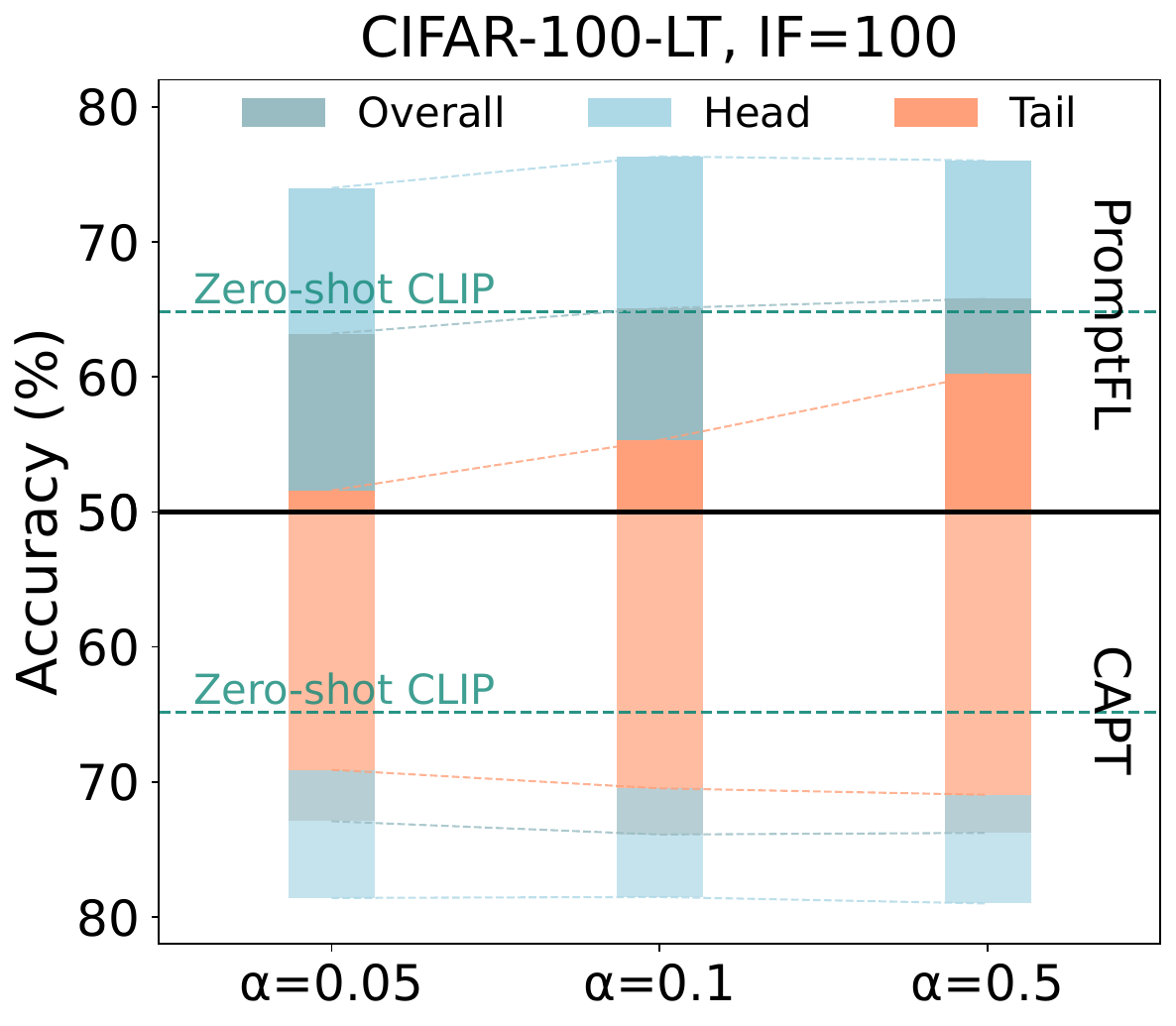}
    \end{subfigure}
    \hfill
    \begin{subfigure}[b]{0.49\columnwidth}
        \centering
        \includegraphics[width=\textwidth]{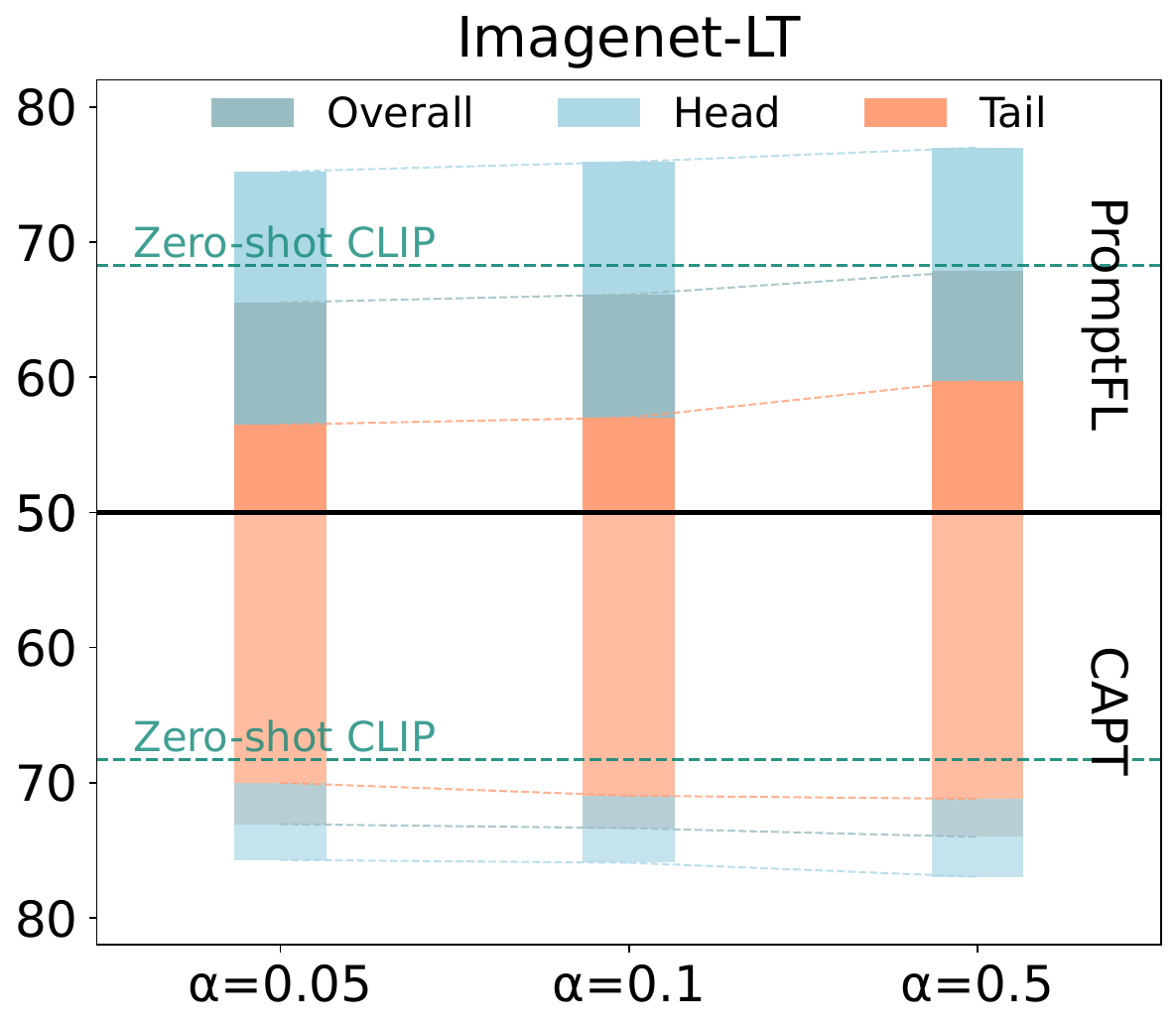}
    \end{subfigure}
    \vspace{-5pt}
    \caption{Performance comparison between PromptFL (upper part) and our proposed CAPT (lower part) on CIFAR-100-LT and ImageNet-LT. As client heterogeneity increases ($\alpha$ decreases from 0.5 to 0.05), PromptFL exhibits an expanding performance gap between \textcolor[RGB]{156,212,212}{head} and \textcolor[RGB]{217,89,94}{tail} classes, with \textcolor[RGB]{153,188,194}{overall} accuracy even falling below zero-shot CLIP baseline (dashed line). In contrast, CAPT effectively mitigates the impact of client heterogeneity and significantly reduces the \textcolor[RGB]{156,212,212}{head}-\textcolor[RGB]{217,89,94}{tail} performance disparity while maintaining superior \textcolor[RGB]{153,188,194}{overall} accuracy across all settings.}
    \label{fig:motivation}
    \vspace{-15pt}
\end{figure}

Federated learning (FL) \cite{Fedavg} enables multiple parties to collaboratively train a shared model without exposing private data, providing a privacy-preserving solution for leveraging distributed data to build powerful machine learning models \cite{FML}. However, FL faces substantial challenges in real-world scenarios, primarily due to the coexistence of non-independent and identically distributed (non-IID) data and long-tailed distributions. In federated settings, local data distributions vary significantly across clients \cite{FL_non, opFL}, and real-world data often follows a long-tailed distribution, where a few classes (head classes) have abundant samples, while the majority (tail classes) are severely underrepresented \cite{long-tailed_survey, Bayes_Imbalance}. The combination of non-IID data distribution and long-tailed class imbalance exacerbates the bias towards head classes. This issue is particularly challenging for tail classes, as their scarce samples are further fragmented across heterogeneous clients. Such fragmentation significantly complicates the learning of effective representations for these underrepresented classes \cite{FLsurvey, li2022long}.

This composite problem, known as federated long-tailed learning, has attracted increasing attention in recent years \cite{Creff,RUCR,Fed-Grab,Fedloge,GBME}. To address these challenges, researchers have proposed various approaches \cite{FRAug,yao2019towards,FedGKD,lu2023personalized, fedbn}. These methods can be broadly categorized into data-centric and model-centric strategies. Data-centric methods aim to alleviate class imbalance by generating synthetic features for tail classes or aggregating features from both local and global models. Model-centric methods focus on adapting the learning process. However, these methods may introduce noise that disrupts model robustness and risk overfitting on minority classes, leading to inconsistency between local and global models\cite{FLsurvey}. Moreover, most existing methods have been designed for traditional FL settings and may not fully exploit the potential of emerging techniques, such as prompt learning with pre-trained models \cite{coop,maple}.

Recently, the success of pre-trained vision-language models, such as CLIP \cite{clip}, has opened up new possibilities for FL. These foundation models, trained on massive image-text pairs, have demonstrated remarkable generalization capabilities and transferability to various downstream tasks \cite{zhang2023prompt}. By adapting pre-trained models to specific tasks using lightweight prompt tuning, researchers have developed federated prompt learning \cite{PromptFL}, where prompts are learned locally at clients and periodically aggregated at the server to share knowledge across the federation. This approach has shown promise in addressing the non-IID problem while reducing communication overhead \cite{fedclip}. 

However, despite these advances, the effectiveness of CLIP and prompt-tuning methods in federated learning under long-tailed distributions remains largely unexplored. It is unclear whether these approaches can successfully address the unique challenges posed by long-tailed data in federated settings. To investigate this, we conducted a preliminary investigation into the application of prompt tuning techniques in federated long-tailed scenarios. Our findings, as illustrated in~\cref{fig:motivation}, reveal a significant challenge: \textit{while prompt tuning improves overall performance in FL, it simultaneously exacerbates the long-tail problem.} As the degree of client heterogeneity increases, the performance gap between head and tail classes widens significantly. Notably, tail classes suffer from substantial performance degradation, leading to a decline in model accuracy under high heterogeneity conditions. This unexpected phenomenon highlights the limitations of current prompt-tuning approaches in federated long-tailed learning and raises three key research questions:
% \begin{enumerate}[label=RQ \arabic*:, wide]
\begin{enumerate}[label=\textbf{RQ\ \textbf{\arabic*}:}, wide]
\item What causes prompt tuning to disproportionately affect tail classes in federated long-tailed scenarios?
\item How can we leverage the power of pre-trained vision-language models while addressing the problem of the sharp decrease in performance for the tail categories brought about by non-IID data?
\item Is there a way to improve tail class performance without sacrificing head class accuracy?
\end{enumerate}

To answer the above-mentioned key questions, we propose Class-Aware Prompt Tuning (CAPT) for federated long-tailed learning, leveraging a pre-trained vision-language model. The core of our approach is a dual-prompt learning mechanism, which introduces a general prompt to learn domain-invariant features and class-aware prompts to capture fine-grained information for each class, particularly benefiting tail classes. To facilitate prompt aggregation in an FL setting, we propose a heterogeneity-aware client clustering strategy that groups clients based on their data distributions, enabling efficient knowledge sharing among clients with similar long-tailed characteristics.

Our extensive empirical study demonstrates that CAPT effectively bridges the performance gap between head and tail classes while unveiling the potential of prompt tuning in federated long-tailed scenarios. Furthermore, we provide a theoretical analysis that formally characterizes why traditional prompt tuning methods struggle with federated long-tailed learning. The main contributions of this work are:
\begin{itemize}
\item We reveal the limitations of existing prompt tuning-based methods in FL under long-tailed data distributions, highlighting their negative impact on tail class performance and the need for novel solutions addressing the joint problem of non-IID and long-tailed data.
\item We propose CAPT, a novel framework introducing class-aware prompts and heterogeneity-aware client clustering, which effectively addresses the challenges of federated long-tailed learning using pre-trained VLM.
\item Our approach achieves state-of-the-art performance on multiple long-tailed datasets, significantly improving tail class performance while maintaining competitive overall accuracy in federated learning scenarios.
\end{itemize}

\section{Related works}
 \subsection{Federated Long-Tailed Learning}
Federated learning with long-tailed data distribution has emerged as a critical research topic, addressing the challenges posed by data heterogeneity and class imbalance across clients. Early attempts focused on adapting loss functions to handle class imbalance in the federated setting. Fed-Focal Loss \cite{Fed-Focal_Loss} and Ratio Loss \cite{Ratio_loss} introduced re-weighting strategies in local training. However, these approaches often struggled with representation learning and relied on impractical assumptions about global class distributions. To address these limitations, subsequent works shifted towards feature-based solutions. FEDIC \cite{FEDIC} pioneered the use of decoupled training, separating feature learning and classifier adaptation. Building upon this idea, CReFF \cite{Creff} further improved performance through auxiliary datasets and refined classifier re-training, though its effectiveness was constrained by federated feature quality.

More recent works have focused on enhancing the quality of feature representation and classifier learning by incorporating additional information and techniques. Zeng et al. \cite{GBME} proposed a global balanced multi-expert framework, which used a global proxy estimated from uploaded client gradients to guide client grouping and train a multi-expert model. Fed-GraB \cite{Fed-Grab} and RedGrape \cite{RedGrape} utilized global statistics to guide local training, with Fed-GraB introducing adaptive gradient balancing and RedGrape incorporating gradient prototypes. CLIP2FL \cite{CLIP2FL} leveraged the vision-language model CLIP to guide both client and server model training, and FedLoGe \cite{Fedloge} enhanced performance through representation learning and classifier alignment within a neural collapse framework. It uses a sparsified classifier to enhance representation learning by pruning noisy features and realigns the backbone to the server and clients based on classifier weight norms.
% Recent advances have focused on leveraging global information to enhance both feature representation and classifier learning. Fed-GraB \cite{Fed-Grab} and RedGrape \cite{RedGrape} utilized global statistics to guide local training, with Fed-GraB introducing adaptive gradient balancing and RedGrape incorporating gradient prototypes. Taking a different approach, CLIP2FL \cite{CLIP2FL} and FedLoGe \cite{Fedloge} explored the potential of pre-trained vision-language models and neural collapse theory, respectively, to improve feature learning.

While these methods have shown promising results, they often struggle to effectively balance the trade-off between maintaining global model consistency and addressing local data imbalances. Additionally, many of these approaches require significant computational resources or rely on assumptions about client data availability that may not hold in real-world federated scenarios \cite{FLsurvey}.

\subsection{Prompt Learning in Vision-Language Models}
Prompt learning has emerged as a promising approach for adapting large-scale VLMs \cite{clip} to downstream tasks while preserving their generalization capabilities. CoOp \cite{coop} and CoCoOp \cite{cocoop} learned text prompts to adapt CLIP's language branch, with CoCoOp enhancing generalization to novel classes through image-conditioned prompts. MaPLe \cite{maple} extended prompt learning to both vision and language branches, demonstrating the benefits of a complete prompting approach. Other works have explored various aspects of prompt learning, such as text-to-text optimization \cite{LASP, KgCoOp}, multi-label recognition \cite{dualcoop}, and unsupervised adaptation \cite{UPL}. These approaches have shown significant improvements in adapting pre-trained VLMs to specific tasks.

Recent research has extended prompt learning to federated learning environments. PromptFL \cite{PromptFL} initiated this direction by replacing traditional model training with prompt learning in distributed settings. FedOTP \cite{fedotp} and FedPGP \cite{fedpgp} further explored prompt-based knowledge sharing and personalization strategies in federated settings. However, these methods primarily focus on addressing data heterogeneity and do not explicitly tackle the challenges posed by long-tailed data distributions in federated settings. Consequently, they often struggle with inconsistent client updates due to class imbalance and missing classes, leading to decreased performance in tail classes, particularly in scenarios with high client heterogeneity.

In contrast to these approaches, our proposed CAPT framework directly tackles the challenges of federated learning with long-tailed data distributions. By introducing class-aware prompts and a heterogeneity-aware client clustering strategy, CAPT enables more effective learning from imbalanced data across clients while maintaining communication efficiency. This novel approach not only addresses the class imbalance issue but also mitigates the negative effects of client heterogeneity on tail class performance.

\section{Proposed Method}

In this section, we present CAPT, a framework tailored for federated learning with long-tailed data. CAPT combines a dual-prompt mechanism—consisting of a general prompt for shared features and class-aware prompts for class-specific knowledge—with a heterogeneity-aware client clustering strategy. This clustering strategy consists of two complementary components: similarity-based clustering groups clients with similar data distributions to enhance tail class learning, and heterogeneity-based clustering balances head and tail class representation across clusters. Additionally, a Multi-Armed Bandit (MAB) scheduler optimizes communication efficiency by dynamically adjusting aggregation frequency. The entire framework is shown in~\cref{fig:method}. \looseness=-1

\begin{figure*}[!t]
\centering
 \includegraphics[width=1.0\textwidth]{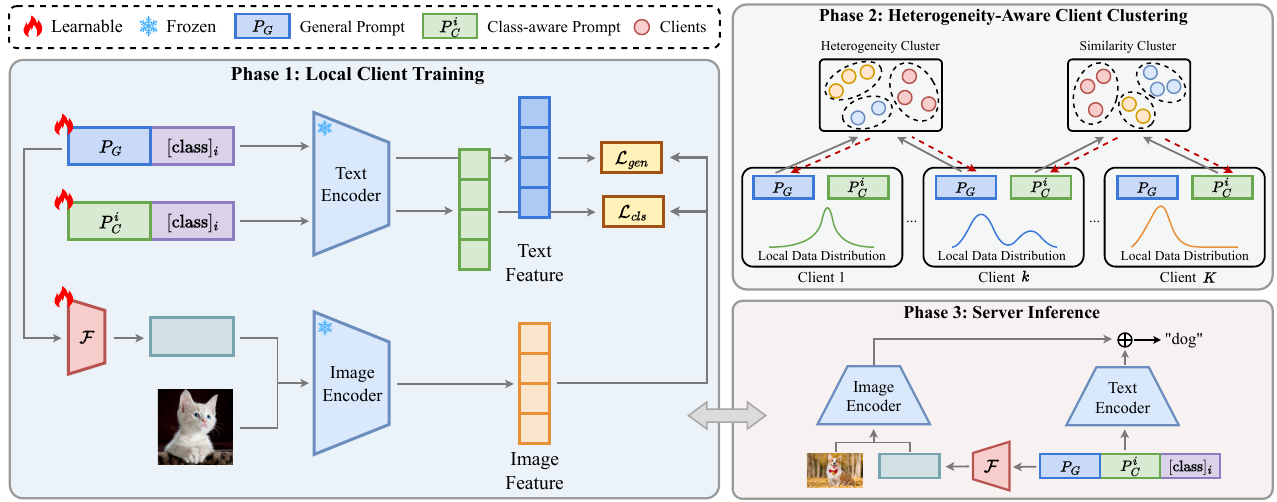}
 \caption{An overview of our proposed CAPT framework.}
 \label{fig:method}
 \vspace{-12pt}
\end{figure*}

\subsection{Preliminaries}
% We consider a FL system comprising $K$ clients and a central server. Each client $k$ possesses a local dataset $D_k = \{(\mathbf{x}_i, y_i)\}_{i=1}^{n_k}$, where $\mathbf{x}_i \in \mathbb{R}^d$ represents an input image (a $d$-dimensional vector) and $y_i \in {1, ..., C}$ denotes the corresponding label from $C$ classes. The global dataset is defined as $D = \bigcup_{k=1}^K D_k$. Importantly, real-world data often exhibits long-tailed distributions \cite{iNaturalist}. Assuming $n_1 \geq n_2 \geq ... \geq n_C$, where $n_c$ is the number of samples in class $c$, we define head classes as sample-rich categories and tail classes as sample-scarce categories.

We consider a FL system with $K$ clients and a central server, where each client $k$ has a local dataset $D_k = \{(\mathbf{x}_i, y_i)\}_{i=1}^{|D_k|}$ with $\mathbf{x}_i \in \mathbb{R}^d$ and $y_i \in \{1, ..., C\}$. The global dataset $D = \bigcup_{k=1}^K D_k$ follows a long-tailed distribution \cite{iNaturalist}, with classes ordered as $n_1 \geq n_2 \geq ... \geq n_C$, where $n_c$ is the sample count for class $c$. We define classes with larger $n_c$ as head classes and those with smaller $n_c$ as tail classes. The non-IID data distribution across clients is simulated using a Dirichlet distribution with parameter $\alpha$, creating varying imbalance patterns among clients.

Our work builds upon the CLIP model \cite{clip}, which learns visual concepts from natural language supervision. CLIP consists of an image encoder $f(\mathbf{x})$ and a text encoder $g(\mathbf{t})$, both mapping inputs to a shared $d$-dimensional embedding space. The image encoder can be a convolutional neural network or a vision transformer (ViT), while the text encoder is typically transformer-based.

To adapt CLIP for downstream tasks without fine-tuning the entire model, we employ prompt learning \cite{coop}. This technique optimizes text prompts used to represent different classes. Hand-crafted prompts are replaced with learnable prompts as $\mathbf{t}_i = [\mathbf{V}_1, \mathbf{V}_2, ..., \mathbf{V}_M, \text{CLASS}_i]$, where $[\mathbf{V}_1, \mathbf{V}_2, ..., \mathbf{V}_M]$ are $M$ learnable context tokens and $\text{CLASS}_i$ is the name of class $i$. These context tokens can be randomly initialized or initialized with pre-trained word embeddings and then optimized to improve model performance.

\subsection{Dual-Prompt Learning Mechanism}
In federated long-tail scenarios, conventional prompt tuning methods face significant challenges when encountering long-tailed data distributions, where tail class samples are not only limited but also unevenly distributed across clients. To address these challenges, we propose a novel federated prompt learning mechanism with three complementary components: a \textit{general prompt} that learns universal features shared across all classes to maintain model robustness, \textit{class-aware prompts} that enables more nuanced feature representations for each specific class, and a \textit{vision-language alignment} mechanism that bridges the semantic gap between visual and textual modalities. Our design particularly benefits tail classes through a synergistic combination: the general prompt provides robust common features learned from abundant data of all classes, while class-aware prompts capture fine-grained discriminative features specific to each class. 

We also provide a theoretical analysis to formally characterize the challenges in federated long-tailed learning with prompt tuning methods. Our analysis reveals that the gradient variance in conventional methods increases proportionally with both the imbalance ratio and client heterogeneity, leading to slower convergence and degraded performance, particularly on tail classes. The complete~\cref{theorem:convergence_difficulty} and~\cref{theorem:imbalance_impact} and their proofs can be found in~\cref{app:TA}. This theoretical analysis answers RQ1 by revealing why prompt tuning methods struggle with tail classes in federated long-tailed scenarios.

\vspace{-12pt}
\paragraph{General Prompt.}
We introduce a learnable general prompt $\textbf{P}_g$ to capture universal features shared by all classes. This general prompt serves as a foundation for learning domain-invariant representations, which is crucial for maintaining overall accuracy in federated learning settings. The general prompt is optimized using a contrastive loss:
\begin{equation}
    \mathcal{L}_{\text{ge}} = -\frac{1}{N}\sum_{i=1}^N \log \frac{\exp(\text{cos}(\mathbf{t}_g^i, \mathbf{z}_i) / \tau)}{\sum_{j=1}^C \exp(\text{cos}(\mathbf{t}_g^j, \mathbf{z}_j) / \tau)},
\label{eq:L_gen}
\end{equation}
where $\mathbf{z}_i = f(\mathbf{x}_i)$ represents the image feature. The text feature $\mathbf{t}_g^i = g([\textbf{P}_g, \text{CLASS}_i])$ is encoded from the general prompt concatenated with the corresponding class token. Here, $C$ denotes the total number of classes, and $\tau$ is the temperature parameter. However, while the general prompt effectively captures shared features, it may not sufficiently represent the unique characteristics of individual classes, particularly those in the tail of the distribution where samples are scarce and heterogeneous.
\vspace{-12pt}
\paragraph{Class-Aware Prompt.}
To address this limitation and the disproportionate effect of prompt tuning on tail classes, we introduce a class-aware prompt $\textbf{P}_c \in \mathbb{R}^{C \times T \times d}$, where $T$ is the number of learnable tokens per class. This prompt is designed to capture class-specific characteristics, allowing for more nuanced representations of both head and tail classes. For class $i$, its corresponding prompt is represented as $\textbf{P}_c^i = [\mathbf{V}_i^{1}, \mathbf{V}i^{2}, ..., \mathbf{V}i^{T}] \in \mathbb{R}^{T \times d}$. During training, only the prompt corresponding to the true class of each sample is updated. The class-aware prompt is optimized using the following loss function:
\begin{equation}
    \mathcal{L}_{\text{ca}} = -\frac{1}{N}\sum_{i=1}^N \log \frac{\pi_{y_i} \exp(\text{cos}(\mathbf{t}_c^{y_i}, \mathbf{z}_i) / \tau)}{\sum_{j=1}^C \pi_j \exp(\text{cos}(\mathbf{t}_c^j, \mathbf{z}_i) / \tau)},
\label{eq:L_cls}
\end{equation}
where $y_i$ is the true class of sample $i$, $\mathbf{t}_c^j = g([\textbf{P}_c^j, \text{CLASS}_j])$ is the text feature encoded from the class-aware prompt concatenated with the corresponding class token, and $\pi_j$ is the global prior probability of class $j$ to handle the long-tailed distribution. Along with the general prompt, the dual-prompt learning mechanism can effectively balance between learning shared domain-invariant representations and class-specific characteristics, thereby enabling robust knowledge transfer across clients while preserving fine-grained features particularly beneficial for tail classes.

\vspace{-12pt}
\paragraph{Vision-Language Alignment.}
While our dual-prompt mechanism effectively learns both general and class-specific features in the text domain, there remains a challenge in aligning these textual representations with their corresponding visual features. To bridge this semantic gap and enhance the interaction between visual and textual modalities, we introduce a vision-language alignment mechanism. Inspired by MaPLe \cite{maple}, we design a mapping function $\mathcal{F}: \mathbb{R}^d \rightarrow \mathbb{R}^v$ that aligns the learned language prompts with the visual feature space:
\begin{equation}
    \mathbf{z}^{\prime}_i = f([x_i, \tilde{P}_{g}]), \quad \text{where} \quad \tilde{P}_{g} = \mathcal{F}(P_g).
\end{equation}

For each sample, we combine the general and class-aware prompts to form an integrated prompt representation: $P_i = [P_g, P_c^{y_i}, \text{CLASS}_i]$, which is then processed through CLIP's text encoder: $\mathbf{t}_i = g(P_i)$. To optimize this multi-modal framework, we propose a joint training objective that combines general and class-aware prompt learning:
\begin{equation}
\mathcal{L} = \mathcal{L}_\text{ge} + \lambda\mathcal{L}_\text{ca},
\label{eq:loss}
\end{equation}
where $\lambda$ is a balancing parameter. This joint optimization enables our model to simultaneously learn domain-invariant features through the general prompt and class-specific representations through the class-aware prompt while maintaining effective cross-modal alignment in federated long-tailed scenarios.

% main

\begin{table*}[ht]
\centering
\vspace{-5pt}
\caption{\textbf{Performance comparison of CAPT against baseline methods} on long-tailed datasets under different degrees of data heterogeneity ($\alpha$). Results show accuracy (\%) for overall, head, mid, and tail classes. The best results are in \textbf{bold}. The second-best results are \underline{underlined}. Blue numbers indicate improvement of our method over the second-best baseline.}
\vspace{-5pt}
\resizebox{\textwidth}{!}{%
\begin{tabular}{cl|cccc|cccc|cccc}
\toprule
\multirow{2}{*}{\textbf{Dataset}} & \multirow{2}{*}{\textbf{Method}} & \multicolumn{4}{c|}{\textbf{$\alpha = 0.05$}} & \multicolumn{4}{c|}{\textbf{$\alpha = 0.1$}} & \multicolumn{4}{c}{\textbf{$\alpha = 0.5$}} \\
\cmidrule(lr){3-6} \cmidrule(lr){7-10} \cmidrule(l){11-14}
&  & \textbf{Overall} & \textbf{Head} & \textbf{Mid} & \textbf{Tail} & \textbf{Overall} & \textbf{Head} & \textbf{Mid} & \textbf{Tail} & \textbf{Overall} & \textbf{Head} & \textbf{Mid} & \textbf{Tail} \\
\midrule
\multirow{7}{*}{\textbf{CIFAR-10-LT}} 
 & FedCLIP~\cite{fedclip} & 84.81 & \textbf{98.10} & \underline{90.20} & 76.26 & 85.55 & \underline{98.45} & \textbf{90.67} & 77.32 & 86.26 & 97.25 & 89.50 & 79.92 \\
 & PromptFL~\cite{PromptFL} & 88.67 & 95.30 & 86.50 & \underline{84.93} & \underline{89.32} & 94.07 & 88.83 & 85.23 & \underline{90.59} & 96.63 & 87.43 & \underline{88.77} \\
 & FedTPG~\cite{fedtpg} & 87.78 & 96.70 & 87.90 & 84.14 & 88.91 & 96.60 & 84.70 & \underline{88.36} & 88.90 & \textbf{98.30} & \underline{91.40} & 83.64 \\
 & FedOTP~\cite{fedotp} & 83.27 & 83.70 & 82.13 & 83.78 & 85.15 & 96.65 & 85.97 & 80.06 & 86.97 & 97.65 & 86.50 & 82.98 \\
 & FedPGP~\cite{fedpgp} & \underline{88.76} & 97.05 & 88.10 & 84.64 & 86.96 & 98.15 & 84.83 & 83.76 & 87.49 & \underline{97.95} & 91.23 & 81.06 \\
 & \cellcolor[gray]{0.9}Ours & \cellcolor[gray]{0.9}\textbf{93.78} & \cellcolor[gray]{0.9}\underline{97.50} & \cellcolor[gray]{0.9}\textbf{90.47} & \cellcolor[gray]{0.9}\textbf{94.28} & \cellcolor[gray]{0.9}\textbf{93.98} & \cellcolor[gray]{0.9}\textbf{98.60} & \cellcolor[gray]{0.9}\underline{89.70} & \cellcolor[gray]{0.9}\textbf{94.70} & \cellcolor[gray]{0.9}\textbf{94.04} & \cellcolor[gray]{0.9}97.80 & \cellcolor[gray]{0.9}\textbf{93.17} & \cellcolor[gray]{0.9}\textbf{93.06} \\
 & & \blueup{5.02} & \reddown{0.60} & \blueup{0.27} & \blueup{9.35} & \blueup{4.66} & \blueup{0.15} & \reddown{0.97} & \blueup{6.34} & \blueup{3.45} & \reddown{0.50} & \blueup{1.77} & \blueup{4.29} \\
\cmidrule{1-14}
\multirow{7}{*}{\textbf{CIFAR-100-LT}} 
 & FedCLIP~\cite{fedclip} & 61.52 & 74.83 & 64.19 & 49.69 & 61.46 & 74.00 & 63.00 & 50.83 & 62.10 & 74.93 & 64.16 & 50.09 \\
 & PromptFL~\cite{PromptFL} & 63.21 & 74.00 & \underline{68.10} & 51.60 & 65.09 & \underline{76.34} & \underline{67.16} & 55.33 & 65.79 & 76.03 & 63.35 & 60.25 \\
 & FedTPG~\cite{fedtpg} & \underline{66.01} & 67.48 & 66.84 & \underline{64.30} & 64.52 & 71.17 & 65.55 & 58.90 & 65.56 & \textbf{79.48} & \underline{69.26} & 52.60 \\
 & FedOTP~\cite{fedotp} & 60.74 & 70.45 & 54.58 & 57.51 & 61.22 & 72.24 & 61.74 & 52.62 & 59.43 & 75.41 & 61.68 & 45.84 \\
 & FedPGP~\cite{fedpgp} & 65.07 & \textbf{78.72} & 64.29 & 55.77 & \underline{65.47} & 76.31 & 62.77 & \underline{59.70} & \underline{66.42} & 71.17 & 65.39 & \underline{63.32} \\
 & \cellcolor[gray]{0.9}Ours & \cellcolor[gray]{0.9}\textbf{72.91} & \cellcolor[gray]{0.9}\underline{78.59} & \cellcolor[gray]{0.9}\textbf{72.52} & \cellcolor[gray]{0.9}\textbf{69.10} & \cellcolor[gray]{0.9}\textbf{73.89} & \cellcolor[gray]{0.9}\textbf{78.52} & \cellcolor[gray]{0.9}\textbf{73.97} & \cellcolor[gray]{0.9}\textbf{70.47} & \cellcolor[gray]{0.9}\textbf{73.77} & \cellcolor[gray]{0.9}\underline{79.00} & \cellcolor[gray]{0.9}\textbf{72.52} & \cellcolor[gray]{0.9}\textbf{70.95} \\
 & & \blueup{6.90} & \reddown{0.13} & \blueup{4.42} & \blueup{4.80} & \blueup{8.42} & \blueup{2.18} & \blueup{6.81} & \blueup{10.77} & \blueup{7.35} & \reddown{0.48} & \blueup{3.26} & \blueup{7.63} \\
\cmidrule{1-14}
\multirow{7}{*}{\textbf{Fashion-MNIST-LT}} 
 & FedCLIP~\cite{fedclip} & 73.56 & \underline{98.70} & \underline{91.37} & 52.82 & 73.66 & \underline{98.40} & \underline{91.43} & 53.10 & 74.21 & 97.95 & 90.37 & 55.02 \\
 & PromptFL~\cite{PromptFL} & 74.21 & \textbf{98.85} & 88.37 & 55.86 & 76.47 & \textbf{98.60} & 90.50 & 59.20 & 77.65 & \underline{98.80} & 90.97 & 61.20 \\
 & FedTPG~\cite{fedtpg} & \underline{76.10} & 96.50 & 91.00 & \underline{59.00} & \underline{77.30} & 95.50 & 90.20 & \underline{62.28} & \underline{78.24} & 98.35 & \underline{91.80} & \underline{62.06} \\
 & FedOTP~\cite{fedotp} & 73.30 & 96.25 & 89.27 & 54.54 & 75.65 & 95.80 & 86.70 & 60.96 & 74.31 & \textbf{99.00} & 84.97 & 58.04 \\
 & FedPGP~\cite{fedpgp} & 71.31 & 91.60 & 87.00 & 53.78 & 73.31 & 98.10 & 86.60 & 55.42 & 77.10 & \textbf{98.95} & 88.73 & 61.38 \\
 & \cellcolor[gray]{0.9}Ours & \cellcolor[gray]{0.9}\textbf{83.41} & \cellcolor[gray]{0.9}97.15 & \cellcolor[gray]{0.9}\textbf{92.90} & \cellcolor[gray]{0.9}\textbf{72.22} & \cellcolor[gray]{0.9}\textbf{84.63} & \cellcolor[gray]{0.9}97.75 & \cellcolor[gray]{0.9}\textbf{93.87} & \cellcolor[gray]{0.9}\textbf{73.84} & \cellcolor[gray]{0.9}\textbf{85.74} & \cellcolor[gray]{0.9}98.25 & \cellcolor[gray]{0.9}\textbf{94.47} & \cellcolor[gray]{0.9}\textbf{75.50} \\
 & & \blueup{7.31} & \reddown{1.70} & \blueup{1.53} & \blueup{13.22} & \blueup{7.33} & \reddown{0.85} & \blueup{2.44} & \blueup{11.56} & \blueup{7.50} & \reddown{0.70} & \blueup{2.67} & \blueup{13.44} \\
\bottomrule
\end{tabular}
}
\label{tab:mainResult_baseline}
\end{table*}

% prompt-tuning
\begin{table*}[ht]
\vspace{-5pt}
\caption{\textbf{Comparison of CAPT with prompt-tuning methods.} Results show accuracy for overall, head, mid, and tail classes on CIFAR-10-LT, CIFAR-100-LT, and Fashion-MNIST-LT under varying degrees of data heterogeneity. Baseline methods are combined with FedAvg for parameter aggregation. \textbf{Bold} indicates best performance, \underline{underline} indicates second-best performance.}
\vspace{-5pt}

\centering
\resizebox{\textwidth}{!}{%
\begin{tabular}{cl|cccc|cccc|cccc}
\toprule

\multirow{2}{*}{\textbf{Dataset}} & \multirow{2}{*}{\textbf{Method}} & \multicolumn{4}{c|}{\textbf{$\alpha = 0.05$}} & \multicolumn{4}{c|}{\textbf{$\alpha = 0.1$}} & \multicolumn{4}{c}{\textbf{$\alpha = 0.5$}} \\
\cmidrule(lr){3-6} \cmidrule(lr){7-10} \cmidrule(l){11-14}

&  & \textbf{Overall} & \textbf{Head} & \textbf{Mid} & \textbf{Tail} & \textbf{Overall} & \textbf{Head} & \textbf{Mid} & \textbf{Tail} & \textbf{Overall} & \textbf{Head} & \textbf{Mid} & \textbf{Tail} \\
\midrule

\multirow{5}{*}{CIFAR-10-LT} 
 & CoCoop+Fedavg & 87.77 	& \underline{97.70} 	& \textbf{90.57} 	& 82.12 	& 87.73 	& 95.85 	& 87.97 	& 84.34 	& 88.21 	& 97.05 	& 85.80 	& 86.12 \\
 & MaPle+Fedavg & \underline{90.39} 	& \textbf{99.35} 	& 89.23 	& 87.50 	& \underline{91.25} 	& \textbf{98.90} 	& 88.40 	& \underline{89.90} 	& \underline{92.05} 	& \textbf{99.20} 	& \textbf{94.07} 	& 87.98 \\
 & KgCoOp+Fedavg & 88.21 	& 97.05 	& 87.67 	& 85.00 	& 87.67 	& 97.50 	& 85.40 	& 85.10 	& 86.93 	& \underline{97.95} 	& 89.20 	& 81.16 \\
 & CLIP-LoRA+Fedavg & 87.77 	& 89.65 	& 86.10 	& \underline{88.02} 	& 87.90 	& 89.31 	& \underline{89.31} 	& 88.26 	& 87.91 	& 89.81 	& 86.34 	& \underline{88.09} \\
 & \cellcolor[gray]{0.9}Ours & \cellcolor[gray]{0.9}\textbf{93.78} & \cellcolor[gray]{0.9}97.50 & \cellcolor[gray]{0.9}\underline{90.47} & \cellcolor[gray]{0.9}\textbf{94.28} & \cellcolor[gray]{0.9}\textbf{93.98} & \cellcolor[gray]{0.9}\underline{98.60} & \cellcolor[gray]{0.9}\textbf{89.70} & \cellcolor[gray]{0.9}\textbf{94.70} & \cellcolor[gray]{0.9}\textbf{94.04} & \cellcolor[gray]{0.9}97.80 & \cellcolor[gray]{0.9}\underline{93.17} & \cellcolor[gray]{0.9}\textbf{93.06} \\
\cmidrule{1-14}

\multirow{5}{*}{CIFAR-100-LT} 
 & CoCoop+Fedavg & 59.37 	& \underline{80.14} 	& 62.16 	& 42.15 	& 60.85 	& 77.76 	& 64.94 	& 47.35 	& 61.62 	& 77.76 	& 64.94 	& 47.35 \\
 & MaPle+Fedavg & \underline{70.12} 	& \textbf{82.17} 	& \underline{72.45} 	& 59.58 	& \underline{70.50} 	& \textbf{82.55} 	& \textbf{74.23} 	& 58.88 	& \underline{70.94} 	& \textbf{85.55} 	& \textbf{74.45} 	& 57.62 \\
 & KgCoOp+Fedavg & 65.81 	& 79.00 	& 67.48 	& 54.95 	& 66.75 	& \underline{80.83} 	& 70.23 	& 53.85 	& 66.99 	& \underline{80.86} 	& 71.32 	& 53.58 \\
 & CLIP-LoRA+Fedavg & 64.97 	& 66.76 	& 65.51 	& \underline{63.25} 	& 64.88 	& 66.74 	& 66.74 	& \underline{63.15} 	& 64.94 	& 66.59 	& 65.55 	& \underline{63.27} \\
 & \cellcolor[gray]{0.9}Ours & \cellcolor[gray]{0.9}\textbf{72.91} & \cellcolor[gray]{0.9}78.59 & \cellcolor[gray]{0.9}\textbf{72.52} & \cellcolor[gray]{0.9}\textbf{69.10} & \cellcolor[gray]{0.9}\textbf{73.89} & \cellcolor[gray]{0.9}78.52 & \cellcolor[gray]{0.9}\underline{73.97} & \cellcolor[gray]{0.9}\textbf{70.47} & \cellcolor[gray]{0.9}\textbf{73.77} & \cellcolor[gray]{0.9}79.00 & \cellcolor[gray]{0.9}\underline{72.52} & \cellcolor[gray]{0.9}\textbf{70.95} \\
\cmidrule{1-14}

\multirow{5}{*}{Fashion-MNIST-LT} 
 & CoCoop+Fedavg & 74.23 	& 96.65 	& 90.17 	& 55.70 	& 75.25 	& \underline{99.00} 	& 88.67 	& 57.70 	& 77.04 	& \underline{98.70} 	& 90.97 	& 60.02 \\
 & MaPle+Fedavg & \underline{80.56} 	& \textbf{99.30} 	& \textbf{93.57} 	& \underline{65.26} 	& \underline{84.00} 	& \textbf{99.15} 	& \underline{93.37} 	& \underline{72.32} 	& \underline{84.91} 	& \textbf{99.20} 	& \underline{94.30} 	& \underline{73.56} \\
 & KgCoOp+Fedavg & 73.42 	& 97.05 	& 89.63 	& 54.24 	& 75.15 	& 98.50 	& 88.27 	& 57.94 	& 75.73 	& 98.50 	& 89.20 	& 58.54 \\
 & CLIP-LoRA+Fedavg & 64.47 	& 90.71 	& 60.29 	& 56.48 	& 64.24 	& 89.99 	& 60.03 	& 56.46 	& 64.51 	& 90.60 	& 61.22 	& 56.05 \\
 & \cellcolor[gray]{0.9}Ours & \cellcolor[gray]{0.9}\textbf{83.41} & \cellcolor[gray]{0.9}\underline{97.15} & \cellcolor[gray]{0.9}\underline{92.90} & \cellcolor[gray]{0.9}\textbf{72.22} & \cellcolor[gray]{0.9}\textbf{84.63} & \cellcolor[gray]{0.9}97.75 & \cellcolor[gray]{0.9}\textbf{93.87} & \cellcolor[gray]{0.9}\textbf{73.84} & \cellcolor[gray]{0.9}\textbf{85.74} & \cellcolor[gray]{0.9}98.25 & \cellcolor[gray]{0.9}\textbf{94.47} & \cellcolor[gray]{0.9}\textbf{75.50} \\
\bottomrule
\end{tabular}
}
\label{tab:mainResult_prompt-tuning}
\vspace{-12pt}
\end{table*}

\subsection{Heterogeneity-Aware Client Clustering}
To enable effective prompt aggregation across heterogeneous clients while preserving class-specific characteristics, we introduce a heterogeneity-aware client clustering strategy. This strategy comprises two complementary methods: similarity-based and heterogeneity-based clustering.
\vspace{-12pt}
\paragraph{Similarity-based Clustering.}
We introduce a similarity-based clustering method that groups clients with similar class distributions together. This allows for more effective aggregation of class-aware prompts within each cluster, as clients with similar class distributions are more likely to benefit from sharing class-specific information. To measure the similarity between clients' class distributions, we employ the Jensen-Shannon (JS) divergence. For clients $i$ and $j$, their JS divergence is calculated as:
\begin{equation}
    \text{JS}(\mathbf{d}_i || \mathbf{d}j) = \frac{1}{2}D\text{KL}(\mathbf{d}_i || \mathbf{d}_M) + \frac{1}{2}D\text{KL}(\mathbf{d}_j || \mathbf{d}_M),
\end{equation}
where $\mathbf{d}_i$ and $\mathbf{d}_j$ are normalized class distribution vectors for the respective clients, $\mathbf{d}_M = \frac{1}{2}(\mathbf{d}_i + \mathbf{d}_j)$, and $D\text{KL}$ is the Kullback-Leibler divergence. We construct a similarity matrix $\mathbf{S}$ and apply the K-means algorithm to form clusters. This clustering strategy enables clients to learn from others with similar data distributions while mitigating the adverse effects of incorporating data from clients with significantly different distributions. In the context of long-tailed data, this approach is particularly beneficial for tail classes, as it allows clients with similar rare class distributions to pool their limited data effectively, thereby improving the learning of tail class features without being overwhelmed by head class data from dissimilar clients.

\vspace{-12pt}
\paragraph{Heterogeneity-based Clustering.}
To further address the non-IID challenge and balance the representation of head and tail classes, we introduce a heterogeneity-based clustering method. This approach groups clients with complementary class distributions, allowing clients with head classes to collaborate with those possessing tail classes, leading to a more balanced global model. We calculate the complementarity between clients $i$ and $j$ as:
\begin{equation}
    \text{Comp}(\mathbf{d}_i, \mathbf{d}_j) = \sum_{c \in \mathcal{H} \cup \mathcal{T}} \mathbf{d}_i(c)(1-\mathbf{d}_j(c)),
\end{equation}
where $\mathcal{H}$ and $\mathcal{T}$ denote the sets of head and tail classes, respectively. Based on this measure, we construct a complementarity matrix $\mathbf{C}$ and apply K-means clustering. The primary objective of this heterogeneity-based clustering is to ensure that each cluster contains a balanced mix of clients with head classes and those with tail classes. Within these heterogeneous clusters, we aggregate the general prompts from all clients. By balancing head and tail class representations within clusters, it mitigates biases towards head classes, addresses non-IID data challenges, and promotes the synthesis of diverse features, ultimately enabling the general prompt to capture more robust and generalized representations across global information.

\vspace{-12pt}
\paragraph{Communication-Efficient Learning.}
To optimize communication efficiency while maintaining model performance, we employ an MAB approach to adaptively adjust intra-cluster iterations and global aggregation frequency. This adaptive mechanism helps reduce communication overhead while ensuring effective knowledge sharing across clients. The detailed formulation of the MAB scheduler and parameter updates, and the whole training process algorithm of CAPT are shown in~\cref{APP:Method Details}.

\textit{Remarks:} The proposed CAPT framework provides a solution to \textbf{RQ 2} through its carefully designed architecture. Through heterogeneity-aware client clustering, the general prompt learns robust universal features by aggregating knowledge from clients with complementary class distributions, effectively handling the non-IID challenge. Meanwhile, the class-aware prompts capture fine-grained discriminative features by grouping clients with similar data patterns, particularly benefiting tail categories where samples are limited and unevenly distributed across clients.

\subsection{Privacy discussions of CAPT}
It is important to note that the CAPT framework inherently leverages aggregated statistics of client label distributions for clustering. While such aggregated statistical information could potentially pose privacy considerations, we clarify that these distributions are aggregated anonymously and irreversibly, without revealing the raw data or identifiable details from individual clients~\cite{RUCR}. Furthermore, the prompt-based learning approach employed in CAPT mitigates privacy leakage risks typically associated with traditional federated learning setups, as gradients from prompts alone significantly limit the effectiveness of gradient inversion attacks~\cite{PromptFL}. A comprehensive privacy analysis and further discussions are provided in~\cref{app:Privacy}.

% \enlargethispage{\baselineskip}
\section{Experiment}

\subsection{Experimental Setup}
\paragraph{Datasets.} We evaluate our method on CIFAR-10-LT, CIFAR-100-LT, Fashion-MNIST-LT, and ImageNet-LT datasets. For the first three datasets, we follow \cite{cao2019learning} to create long-tailed versions using exponential decay with imbalance factors ($\rho = N{max} / N{min}$). For ImageNet-LT, we use the standard version from \cite{imagenet-lt}. Following \cite{Fedloge}, we analyze performance across Head, Mid, and Tail class groups based on sample frequency.

\vspace{-12pt}
\paragraph{Baselines.} We evaluate CAPT against state-of-the-art methods in both federated learning and prompt-tuning domains using ViT-B/16 as the backbone architecture. We compare with FedCLIP \cite{fedclip} and PromptFL \cite{PromptFL}, which are representative CLIP-based federated learning methods, as well as FedTPG~\cite{fedtpg}, FedOTP~\cite{fedotp}, and FedPGP~\cite{fedpgp}, which focus on personalization in federated prompt learning. We also evaluate against conventional prompt-tuning methods (CoCoop \cite{cocoop}, KgCoOp \cite{KgCoOp}, MaPle \cite{maple}, and CLIP-LoRA \cite{clip-lora}) combined with FedAvg \cite{Fedavg} for parameter aggregation. Additionally, we conduct experiments with ResNet-50 backbone to compare with traditional federated learning methods trained from scratch, including CReFF \cite{Creff}, Fed-Grab \cite{Fed-Grab}, FedLoge \cite{Fedloge}, and RUCR \cite{RUCR}.
\vspace{-12pt}

\paragraph{Implementation Details.}
We implement CAPT using PyTorch and evaluate it on ResNet-50 and ViT-B/16 backbones. In our federated learning setup, we use 20 clients with non-IID data distribution controlled by a Dirichlet parameter $\alpha$. We perform training for a total of 100 communication rounds using SGD, randomly selecting 40\% of the clients to participate in each round. The balancing parameter $\lambda$ in~\cref{eq:loss} is set to 1. For the heterogeneity-aware clustering strategy, we configure the number of clusters $K$ to 3 based on empirical results in~\cref{app:AAS}, which show robust performance across a range of cluster numbers. Further implementation details are provided in~\cref{app:Experimental Details}.

\subsection{Comparison with State-of-the-art Methods}
\paragraph{Small-scale Datasets Comparison.}
For small-scale datasets, we compare CAPT with three categories of methods: CLIP-based federated learning approaches, federated versions of centralized CLIP fine-tuning methods, and state-of-the-art federated learning methods specifically designed to address class imbalance.

\begin{figure}[!t]
    \centering
    \begin{subfigure}[b]{0.49\columnwidth}
        \centering
        \includegraphics[width=\textwidth]{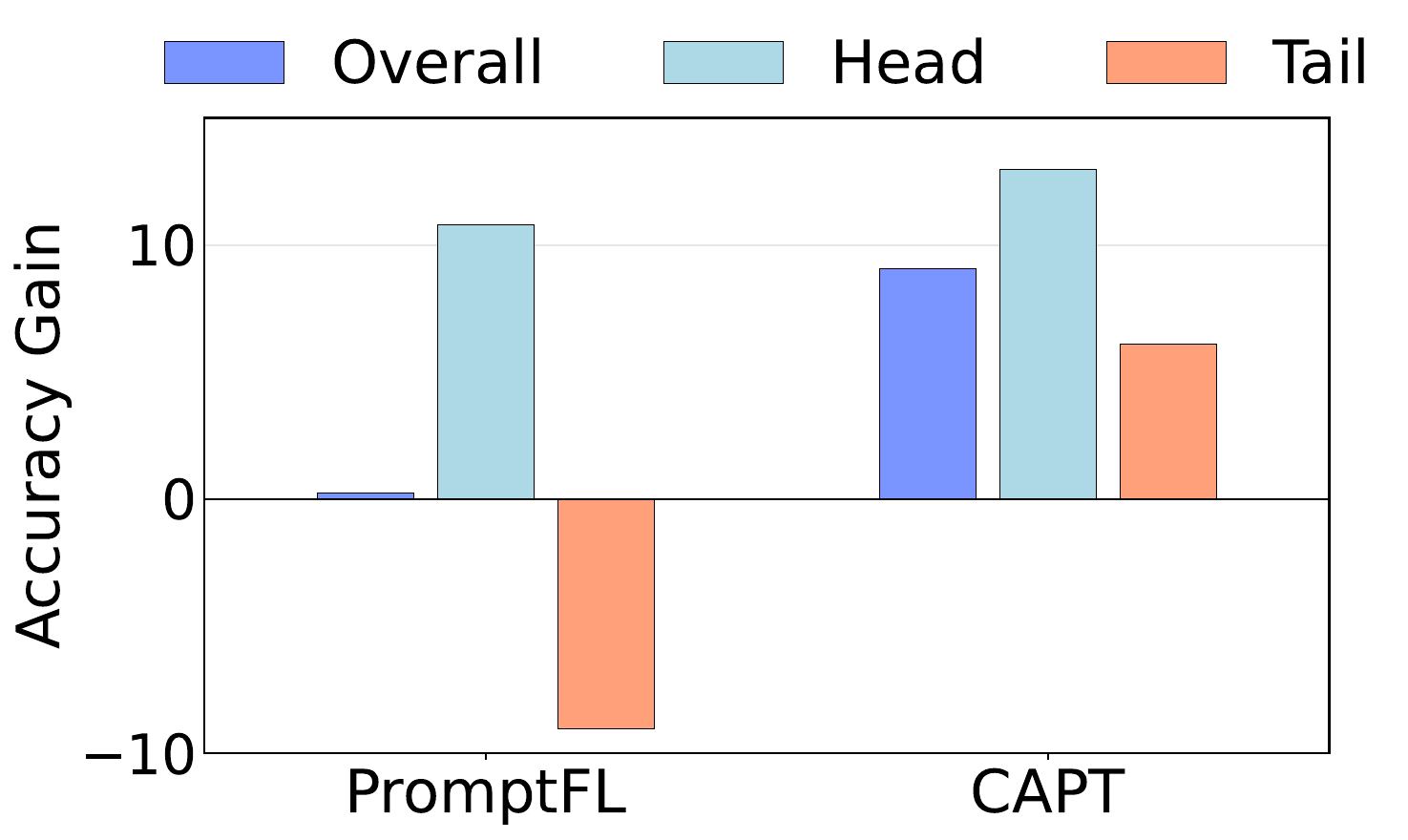}
        \caption{CIFAR-100-LT}
    \end{subfigure}
    \hfill
    \begin{subfigure}[b]{0.49\columnwidth}
        \centering
        \includegraphics[width=\textwidth]{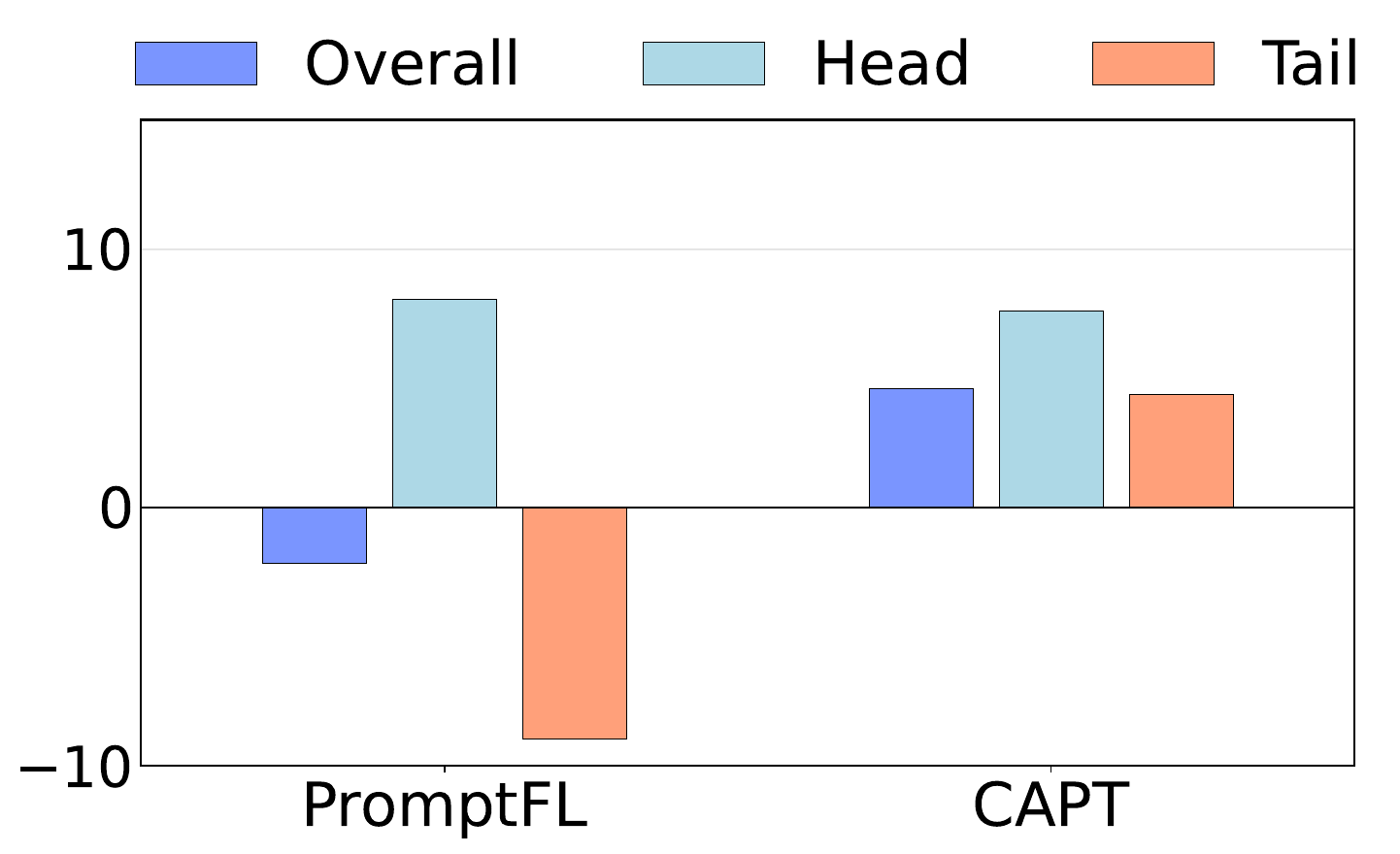}
        \caption{ImageNet-LT}
    \end{subfigure}
    \vspace{-5pt}
    \caption{\textbf{Accuracy gains (\%) comparison between PromptFL and CAPT relative to CLIP.} CAPT achieves superior performance across \textcolor[RGB]{173,216,230}{head classes}, \textcolor[RGB]{255,160,122}{tail classes}, and \textcolor[RGB]{122,149,255}{overall categories}, with particularly significant improvements on \textcolor[RGB]{255,160,122}{tail classes}}
    \vspace{-5pt}
    \label{fig:Accuracy_gains}
\end{figure}

\begin{figure}[!t]
    \centering
    \begin{subfigure}[b]{0.49\columnwidth}
        \centering
        \includegraphics[width=\textwidth]{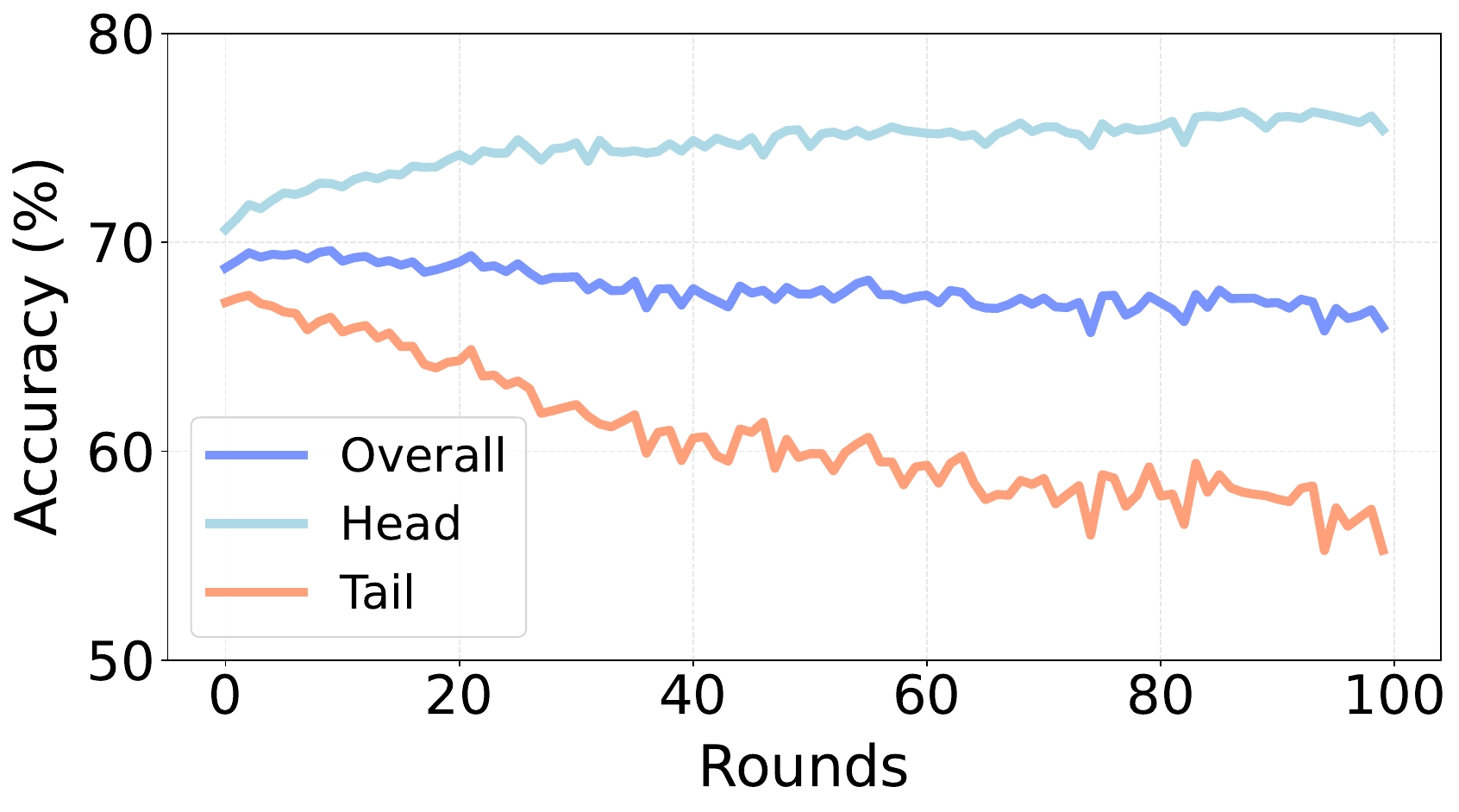}
        \caption{PromptFL}
    \end{subfigure}
    \hfill
    \begin{subfigure}[b]{0.49\columnwidth}
        \centering
        \includegraphics[width=\textwidth]{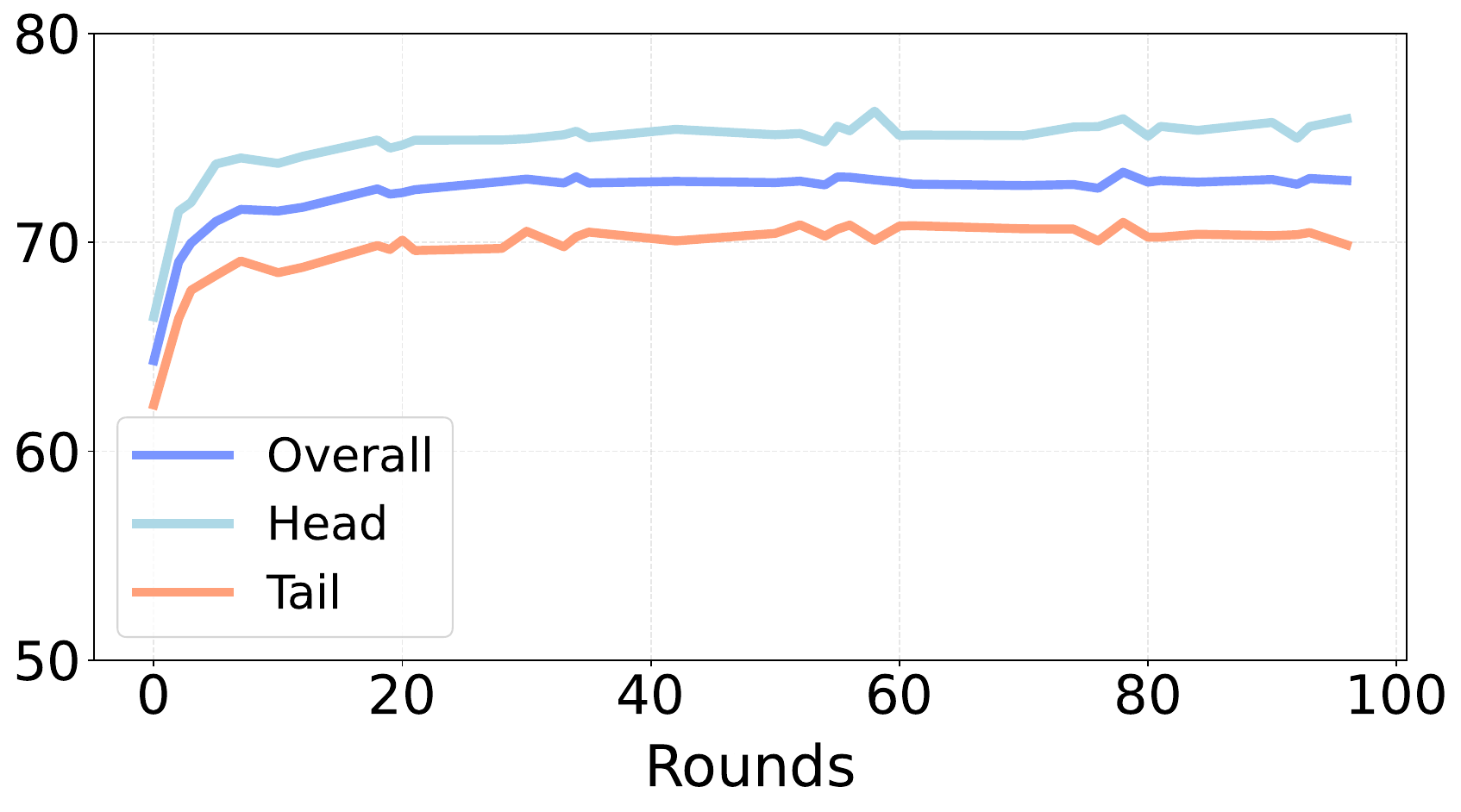}
        \caption{CAPT}
    \end{subfigure}
    \vspace{-5pt}
    \caption{Training dynamics comparison between PromptFL and CAPT on ImageNet-LT dataset.}
    \vspace{-10pt}
    \label{fig:acc_trend}
\end{figure}

\cref{tab:mainResult_baseline} compares CAPT with state-of-the-art CLIP-based federated learning methods across various data heterogeneity levels. Results consistently demonstrate CAPT’s superior performance in overall accuracy and tail class effectiveness. \cref{fig:Accuracy_gains} further highlights these improvements, showing balanced accuracy gains for CAPT across head-to-tail classes, while PromptFL suffers significant tail-class degradation. More extensive experimental results across different imbalance factors and heterogeneity levels can be found in~\cref{app:AER}, which further demonstrates CAPT's consistently superior performance under various settings.

To investigate the efficacy of simply federating state-of-the-art centralized CLIP fine-tuning methods for federated long-tailed learning, we compare CAPT with federated versions of CoCoOp \cite{cocoop}, MaPLe \cite{maple}, KgCoOp \cite{KgCoOp}, and CLIP-LoRA \cite{clip-lora} in~\cref{tab:mainResult_prompt-tuning}. Our results indicate that CAPT consistently outperforms these methods across different datasets and varying heterogeneity levels, underscoring the significance of CAPT's class-aware prompt design and heterogeneity-aware client clustering in tackling the unique challenges of federated long-tailed learning.

\begin{table}[!t]
\vspace{-5pt}
\caption{Comparison of CAPT with state-of-the-art methods on ImageNet-LT under various non-IID settings.}
\vspace{-5pt}
\centering
\resizebox{0.85\columnwidth}{!}{%
\begin{tabular}{c|l|cccc}
\toprule
\textbf{\#$\boldsymbol{\alpha}$} & \textbf{Method} & \textbf{Overall} & \textbf{Head} & \textbf{Mid} & \textbf{Tail} \\
\midrule
\multirow{5}{*}{0.05} & FedCLIP \cite{fedclip}& 60.51 & 63.90 & 58.72 & 56.47 \\
& PromptFL \cite{PromptFL}& 65.54 & 75.22 & 64.05 & 56.50 \\
&FedTPG~\cite{fedtpg} & 66.18 & 70.16 & 65.27 & 61.93 \\
& MaPle+FedAvg & 65.75 & \textbf{75.88} & 63.75 & 54.93 \\
& \cellcolor[gray]{0.9}Ours & \cellcolor[gray]{0.9}\textbf{73.06} & \cellcolor[gray]{0.9}75.71 & \cellcolor[gray]{0.9}\textbf{72.92} & \cellcolor[gray]{0.9}\textbf{70.04} \\
\midrule
\multirow{5}{*}{0.1} & FedCLIP \cite{fedclip}& 60.83 & 64.25 & 58.93 & 56.63 \\
& PromptFL \cite{PromptFL}& 66.14 & 75.94 & 64.59 & 57.02 \\
&FedTPG~\cite{fedtpg} & 67.06 & 71.86 & 65.07 & 62.82 \\
& MaPle+FedAvg & 65.77 & \textbf{76.59} & 64.33 & 55.12 \\
& \cellcolor[gray]{0.9}Ours & \cellcolor[gray]{0.9}\textbf{73.36} & \cellcolor[gray]{0.9}75.92 & \cellcolor[gray]{0.9}\textbf{72.61} & \cellcolor[gray]{0.9}\textbf{70.96} \\
\midrule
\multirow{5}{*}{0.5} & FedCLIP \cite{fedclip}& 65.43 & 70.14 & 64.69 & 61.89 \\
& PromptFL \cite{PromptFL}& 67.87 & 77.00 & 65.60 & 59.76 \\
&FedTPG~\cite{fedtpg} & 68.21 & 73.41 & 66.68 & 63.54 \\
& MaPle+FedAvg & 67.61 & \textbf{77.95} & 65.25 & 57.93 \\

& \cellcolor[gray]{0.9}Ours & \cellcolor[gray]{0.9}\textbf{74.01} & \cellcolor[gray]{0.9}76.96 & \cellcolor[gray]{0.9}\textbf{73.06} & \cellcolor[gray]{0.9}\textbf{71.19} \\
\bottomrule
\end{tabular}
}
\vspace{-10pt}
\label{tab:imagenet-lt-results}
\end{table}

\vspace{-12pt}
\paragraph{Large-scale Dataset Comparison.} For the large-scale ImageNet-LT dataset, we conduct comprehensive evaluations across different heterogeneity levels, with results presented in~\cref{tab:imagenet-lt-results}. The experimental results consistently demonstrate that CAPT effectively learns from scarce and heterogeneous data, maintaining robustness to client heterogeneity. A further analysis of training dynamics,~\cref{fig:acc_trend} reveals that CAPT preserves stable and balanced accuracy for both head and tail classes throughout training, while PromptFL experiences a widening performance gap due to declining tail-class accuracy. These findings validate CAPT’s ability to mitigate class imbalance and achieve stable optimization in federated long-tailed scenarios.

Furthermore, CAPT demonstrates superior performance compared to state-of-the-art federated long-tail learning methods using ResNet backbones, achieving consistent improvements in both overall accuracy and balanced performance across all class categories. More detailed comparative experiments and analysis on ResNet-based models can be found in~\cref{app:AER}. These results answer \textbf{RQ 3} by demonstrating that CAPT successfully improves tail class performance without sacrificing head class accuracy.

% Ablation
\begin{table}[t]
\vspace{-5pt}
\caption{Ablation study of CAPT components, comparing different prompt types and cluster aggregation strategies.}
\vspace{-5pt}
\centering
\resizebox{0.95\columnwidth}{!}{%
\begin{tabular}{l|l|cccc}
\midrule
\multicolumn{2}{l|}{\multirow{2}{*}{\textbf{Method}}} & \multicolumn{4}{c}{\textbf{CIFAR-10-LT}} \\
\cline{3-6}
\multicolumn{2}{l|}{} & \textbf{Overall} & \textbf{Head} & \textbf{Mid} & \textbf{Tail} \\
\midrule
\multirow{3}{*}{\rotatebox[origin=c]{90}{\textbf{w/o Clu.}}}
 & General Prompt & 89.23 & 97.50 & 90.95 & 84.93 \\
 & Class-aware Prompt & 90.27 & 98.75 & \textbf{91.53} & 86.12 \\
 & Gen+Cla & \textbf{90.42} & \textbf{99.30} & 91.27 & \textbf{86.36} \\
\midrule
\multirow{3}{*}{\rotatebox[origin=c]{90}{\textbf{w/ Clu.}}}
 & General Prompt & 92.00 & 94.55 & 90.57 & 91.84 \\
 & Class-aware Prompt & 92.69 & 96.85 & 90.00 & 92.64 \\
 & Gen+Cla & \textbf{94.04} & \textbf{97.80} & \textbf{93.17} & \textbf{93.06} \\
\midrule
\end{tabular}%
}
\vspace{-3pt}
\label{tab:dataset_comparison_single_column_revised}
\end{table}

\begin{table}[!t]
\vspace{-5pt}
    \caption{Ablation study on vision-language alignment and MAB scheduler components across different heterogeneity degrees ($\alpha$).}
\vspace{-5pt}
    \centering
    \resizebox{\linewidth}{!}{
    \begin{tabular}{cc|ccc|ccc}
        \toprule
        \multirow{2}{*}{V-L} & \multirow{2}{*}{MAB} & \multicolumn{3}{c|}{CIFAR-10-LT} & \multicolumn{3}{c}{CIFAR-100-LT} \\
        \cmidrule{3-8}
        & & 0.05 & 0.1 & 0.5 & 0.05 & 0.1 & 0.5 \\
        \midrule
        $\checkmark$ &  & 92.29 & 93.46 & 93.22 & 69.93 & 70.83 & 71.70 \\
         & $\checkmark$ & 92.10 & 91.97 & 92.30 & 68.60 & 68.12 & 68.19 \\
        $\checkmark$ & $\checkmark$ & \textbf{93.78} & \textbf{93.98} & \textbf{94.04} & \textbf{72.91} & \textbf{73.89} & \textbf{73.77} \\
        \bottomrule
    \end{tabular}
    }
\vspace{-10pt}
    \label{tab:vlmab}
\end{table}

\subsection{Ablation Studies}
\paragraph{Impact of Prompt Design and Clustering Strategy.} We examine the effectiveness of our multi-modal prompt design by comparing general prompt, class-aware prompt, and their combination (Gen+Cla). As shown in~\cref{tab:dataset_comparison_single_column_revised}, the class-aware prompt outperforms the general prompt across all class categories, demonstrating the importance of capturing class-specific features in long-tailed scenarios. Moreover, the introduction of our heterogeneity-aware client clustering strategy (w/Clu.) significantly boosts performance across all prompt configurations, with particularly notable improvements in tail class performance. Interestingly, we observe that when clustering is applied, the performance gap between different prompt configurations narrows, particularly for tail classes. This suggests that our clustering strategy effectively mitigates the challenges of data heterogeneity, allowing even simpler prompt designs to achieve competitive levels of performance.

To provide insights into the learned representations, we visualize the features extracted by CAPT's general and class-aware prompts using t-SNE in~\cref{fig:tsne}, where different colors denote different classes in CIFAR-10 and coarse-grained categories in CIFAR-100. The visualization reveals that features learned by the general prompt are more concentrated and generalized, clustering towards the center, indicating the capture of domain-invariant features. In contrast, the class-aware prompt learns more distinctive features, with data points spreading towards the edges, aligning with our design intention of capturing fine-grained, class-specific information. This dual representation contributes to CAPT's strong performance across head, mid, and tail classes in federated long-tailed scenarios.

\vspace{-12pt}

\paragraph{Impact of Vision-Language Alignment and MAB Scheduler.} We conduct ablation studies on CIFAR-10-LT and CIFAR-100-LT to evaluate our key components. As shown in~\cref{tab:vlmab}, the vision-language alignment mechanism consistently improves across all imbalance settings, with larger gains in more imbalanced scenarios. While the MAB scheduler alone shows moderate improvements through its adaptive communication strategy, the combination of both components yields the best results.

\vspace{-12pt}
\paragraph{Influence of Similarity Metrics and Clustering Parameters.}  
We investigate the sensitivity of CAPT to different similarity metrics, clustering techniques, and numbers of clusters. Our results indicate that CAPT is robust across various clustering methods and similarity metrics, consistently achieving strong performance with minimal variation. Regarding the number of clusters, our analysis demonstrates stable performance across a moderate range of cluster numbers, underscoring the robustness and practicality of our clustering strategy. More detailed experimental quantitative results and analysis are presented in~\cref{app:AAS}.

\begin{figure}[!t]
    \centering
    \begin{subfigure}[b]{0.47\columnwidth}
        \centering
        \includegraphics[width=\textwidth]{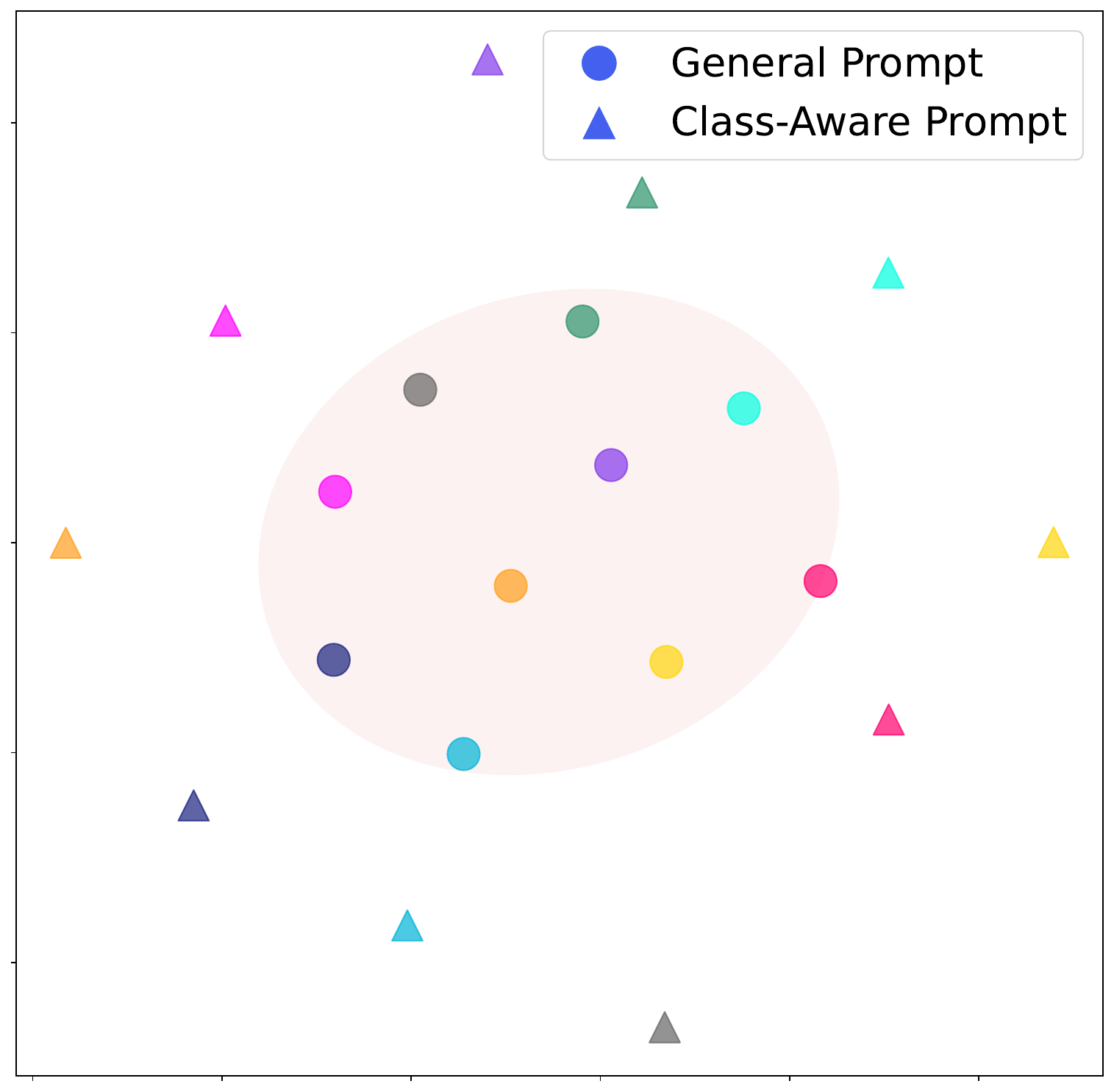}
        \caption{CIFAR-10}
        \label{fig:tsne_a}
    \end{subfigure}
    \hfill
    \begin{subfigure}[b]{0.47\columnwidth}
        \centering
        \includegraphics[width=\textwidth]{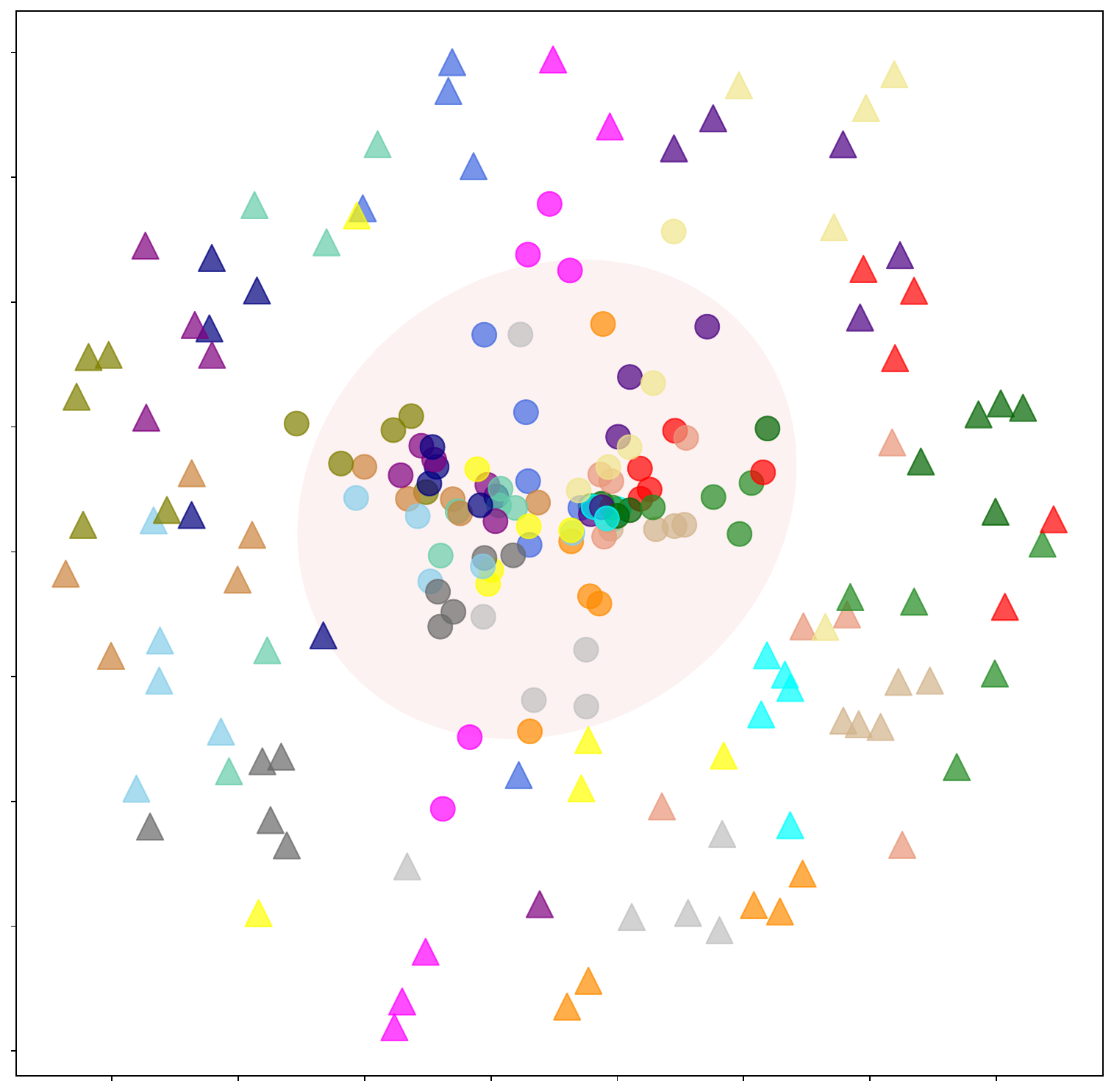}
        \caption{CIFAR-100}
        \label{fig:tsne_b}
    \end{subfigure}
    \vspace{-5pt}
    \caption{t-SNE visualization of General Prompt (\protect\tikz \protect\draw[fill=black] (0,0) circle (0.65ex);) and Class-Aware Prompt (\protect\tikz \protect\draw[fill=black] (0,0) -- ++(1.2ex,0) -- ++(-0.6ex,1.2ex) -- cycle;) embeddings. The shaded area illustrates the region where General Prompt features are clustered.}
    \vspace{-15pt}
    \label{fig:tsne}
\end{figure}
% \enlargethispage{\baselineskip}

\section{Conclusion}
In this paper, we proposed CAPT, a novel framework for federated learning that effectively addresses long-tailed data distributions and client heterogeneity. Our approach introduces a dual-prompt design with general and class-aware prompts coupled with a heterogeneity-aware client clustering strategy. CAPT significantly improves performance on tail classes while maintaining competitive accuracy on head classes. Notably, CAPT achieves substantial improvements in overall accuracy and tail class performance compared to state-of-the-art methods across various degrees of data heterogeneity. These results highlight CAPT's effectiveness in tackling the challenges of federated long-tailed learning, paving the way for more robust and equitable machine learning models in decentralized settings.

{
    \small
    \bibliographystyle{ieeenat_fullname}
    % \bibliography{main}

}

\clearpage
\setcounter{page}{1}
\maketitlesupplementary
\appendix

\section{Method Details}
\label{APP:Method Details}
\paragraph{Training Process.}
\cref{Algorithm 1} outlines the training procedure for our Class-Aware Prompt Tuning (CAPT) method. At each round, selected clients perform local training to update their prompts and coupling function. The server performs clustering-based aggregation on the class-aware and general prompts to enable knowledge sharing across clients.

\begin{algorithm}[h]
\caption{CAPT: Class-Aware Prompt Tuning for Federated Long-tailed Learning}
\KwIn{Number of clients $K$, communication rounds $T$}
\KwOut{General prompt $P_g$, class-aware prompts $P_c$, coupling function $\mathcal{F}$}
Initialize $P_g$, $P_c$, $\mathcal{F}$, MAB arms\;
\For{$t = 1$ \KwTo $T$}{
    Randomly select a set of active clients $S_t$\;
    \For{$k \in S_t$}{
        Update $P_g^k$, $P_c^k$, $\mathcal{F}^k$ via local training\;
    }
    $\mathcal{C}_s \leftarrow$ Similarity-based clustering\;
    Aggregate and distribute $\{P_c^k | k \in c\}$ for each $c \in \mathcal{C}_s$\;
    $\mathcal{C}_h \leftarrow$ Heterogeneity-based clustering\;
    Aggregate and distribute $\{P_g^k | k \in c\}$ for each $c \in \mathcal{C}_h$\;
    \If{MABSelect()}{
        Aggregate and distribute $P_g$, $P_c$, $\mathcal{F}$\;
        UpdateMAB()\;
    }
}
\Return{$P_g$, $P_c$, $\mathcal{F}$}\;
\label{Algorithm 1}
\end{algorithm}

\vspace{-10pt}
\paragraph{Communication-Efficient Learning.}
\label{App:MAB}
To optimize communication efficiency while maintaining model performance, we employ a Multi-Armed Bandit (MAB) approach to adaptively adjust intra-cluster iterations and global aggregation frequency. The MAB scheduler is formulated as follows:
\begin{equation}
    V_{k+1}(a) = V_k(a) + \eta_k \cdot (r_k - V_k(a)),
\label{eq:13}
\end{equation}
where $V_k(a)$ is the estimated value of arm $a$ at round $k$, $r_k$ is the observed reward, and $\eta_k$ is a decaying learning rate. The MAB scheduler selects an arm using an $\epsilon$-greedy strategy with an exploration bonus term:
\begin{equation}
    a_k = \arg\max_a \left[ \frac{V_k(a)}{N_k(a) + \epsilon} + c \cdot \sqrt{\frac{\log \sum_a N_k(a)}{N_k(a) + \epsilon}} \right].
\label{eq:14}
\end{equation}

% main results
\begin{table*}[htbp]
\caption{Performance comparison between CAPT and baseline methods on different datasets with varying degrees of data heterogeneity and imbalance factors.}
\centering
\resizebox{\textwidth}{!}{%
\begin{tabular}{ccl|cccc|cccc|cccc}
\toprule

\multirow{2}{*}{\textbf{IF}} & \multirow{2}{*}{\textbf{Dataset}} & \multirow{2}{*}{\textbf{Method}} & \multicolumn{4}{c|}{\textbf{$\alpha = 0.05$}} & \multicolumn{4}{c|}{\textbf{$\alpha = 0.1$}} & \multicolumn{4}{c}{\textbf{$\alpha = 0.5$}} \\
\cmidrule(lr){4-7} \cmidrule(lr){8-11} \cmidrule(l){12-15}

 &  &  & \textbf{Overall} & \textbf{Head} & \textbf{Mid} & \textbf{Tail} & \textbf{Overall} & \textbf{Head} & \textbf{Mid} & \textbf{Tail} & \textbf{Overall} & \textbf{Head} & \textbf{Mid} & \textbf{Tail} \\
\midrule
\multirow{15}{*}{\rotatebox[origin=c]{90}{\textbf{IF = 100}}} 
 & \multirow{5}{*}{CIFAR-10-LT} & PromptFL\cite{PromptFL} & 88.67& 95.30& 86.50& 84.93& 89.32& 94.07& 88.83& 85.23& 90.59& 96.63& 87.43 &88.77\\
 &  & FedCLIP\cite{fedclip} &84.81 	&98.10 	&90.20 	&76.26 	&85.55 	&98.45 	&90.67 	&77.32 	&86.26 	&97.25 	&89.50 	&79.92\\
 &  & MaPle + FedAvg & 90.39& \textbf{99.35}& 89.23& 87.50& 91.25& \textbf{98.90}& 88.40& 89.90& 92.05& \textbf{99.20}& 94.07& 87.98\\
 &  &  \cellcolor[gray]{0.9}Ours &  \cellcolor[gray]{0.9}\textbf{93.78}&  \cellcolor[gray]{0.9}97.50&  \cellcolor[gray]{0.9}\textbf{90.47}&  \cellcolor[gray]{0.9}\textbf{94.28}&  \cellcolor[gray]{0.9}\textbf{93.98}&  \cellcolor[gray]{0.9}98.60&  \cellcolor[gray]{0.9}\textbf{89.70}&  \cellcolor[gray]{0.9}\textbf{94.70}&  \cellcolor[gray]{0.9}\textbf{94.04}&  \cellcolor[gray]{0.9}97.80&  \cellcolor[gray]{0.9}93.17&  \cellcolor[gray]{0.9}\textbf{93.06}\\
& & & \blueup{5.11} & \blueup{2.20} & \blueup{3.97} & \blueup{9.35} & \blueup{4.66} & \blueup{4.53} & \blueup{0.87} & \blueup{9.47} & \blueup{3.45} & \blueup{1.17} & \blueup{5.74} & \blueup{4.29} \\
\cmidrule{2-15}
 & \multirow{5}{*}{CIFAR-100-LT} & PromptFL\cite{PromptFL}& 63.21& 74.00& 68.10& 51.60& 65.09& 76.34& 67.16& 55.33& 65.79& 76.03& 63.35& 60.25\\
 &  & FedCLIP\cite{fedclip} &61.52	&74.83	&64.19	&49.69	&61.46	&74.00	&63.00	&50.83	&62.10	&74.93	&64.16	&50.09\\
 &  & MaPle + FedAvg & 70.12& \textbf{82.17}& 72.45& 59.58& 70.50& \textbf{82.55}& \textbf{74.23}& 58.88& 70.94& \textbf{85.55}& \textbf{74.45}& 57.62\\
&  &  \cellcolor[gray]{0.9}Ours &  \cellcolor[gray]{0.9}\textbf{72.91}&  \cellcolor[gray]{0.9}78.59&  \cellcolor[gray]{0.9}\textbf{72.52}&  \cellcolor[gray]{0.9}\textbf{69.10}&  \cellcolor[gray]{0.9}\textbf{73.89}&  \cellcolor[gray]{0.9}78.52&  \cellcolor[gray]{0.9}73.97&  \cellcolor[gray]{0.9}\textbf{70.47}&  \cellcolor[gray]{0.9}\textbf{73.77}&  \cellcolor[gray]{0.9}79.00&  \cellcolor[gray]{0.9}72.52&  \cellcolor[gray]{0.9}\textbf{70.95}\\
& & & \blueup{9.70} & \blueup{4.59} & \blueup{4.42} & \blueup{17.50} & \blueup{8.80} & \blueup{2.18} & \blueup{6.81} & \blueup{15.14} & \blueup{7.98} & \blueup{2.97} & \blueup{9.17} & \blueup{10.70} \\
\cline{2-15}
 & \multirow{5}{*}{Fashion-MNIST-LT} & PromptFL\cite{PromptFL}& 74.21& 98.85& 88.37& 55.86& 76.47& 98.60& 90.50& 59.20& 77.65& 98.80& 90.97& 61.20\\
&  & FedCLIP\cite{fedclip} &73.56	&98.7	&91.37	&52.82	&73.66	&98.4	&91.43	&53.1	&74.21	&97.95	&90.37	&55.02\\
 &  & MaPle + FedAvg & 80.56& \textbf{99.30}& \textbf{93.57}& 65.26& 84.00& \textbf{99.15}& 93.37& 72.32& 84.91& \textbf{99.20}& 94.30& 73.56\\
 &  &  \cellcolor[gray]{0.9}Ours &  \cellcolor[gray]{0.9}\textbf{83.41}&  \cellcolor[gray]{0.9}97.15&  \cellcolor[gray]{0.9}92.90&  \cellcolor[gray]{0.9}\textbf{72.22}&  \cellcolor[gray]{0.9}\textbf{84.63}&  \cellcolor[gray]{0.9}97.75&  \cellcolor[gray]{0.9}\textbf{93.87}&  \cellcolor[gray]{0.9}\textbf{73.84}&  \cellcolor[gray]{0.9}\textbf{85.74}&  \cellcolor[gray]{0.9}98.25&  \cellcolor[gray]{0.9}\textbf{94.47}& \cellcolor[gray]{0.9}\textbf{75.50}\\
&  & & \blueup{9.20} & \reddown{1.70} & \blueup{4.53} & \blueup{16.36} & \blueup{8.16} & \reddown{0.85} & \blueup{3.37} & \blueup{14.64} & \blueup{8.09} & \reddown{0.55} & \blueup{3.50} & \blueup{14.30} \\

\midrule
\multirow{15}{*}{\rotatebox[origin=c]{90}{\textbf{IF = 50}}} 
 & \multirow{5}{*}{CIFAR-10-LT} & PromptFL\cite{PromptFL} &88.33 &\textbf{96.70}&83.53 &	85.65 &	88.98 &95.07 &85.93 &86.70 &90.82 &	93.47&91.20&88.55\\
 &  & FedCLIP\cite{fedclip} &89.01	&96.13	&87.05	&84.50	&89.39	&94.93	&86.52	&87.67	&89.71	&94.63	&87.17	&88.17\\
 &  & MaPle + FedAvg &91.82&94.93&90.53&90.45&92.52&\textbf{96.33}&89.67 &91.80&93.11&\textbf{96.77}&\textbf{91.97}&91.22 \\
 &  &  \cellcolor[gray]{0.9}Ours & \cellcolor[gray]{0.9}\textbf{94.69}& \cellcolor[gray]{0.9}96.37& \cellcolor[gray]{0.9}\textbf{91.37}& \cellcolor[gray]{0.9}\textbf{95.92}& \cellcolor[gray]{0.9}\textbf{94.62}& \cellcolor[gray]{0.9}95.30 & \cellcolor[gray]{0.9}\textbf{92.00}& \cellcolor[gray]{0.9}\textbf{96.08}& \cellcolor[gray]{0.9}\textbf{94.34}& \cellcolor[gray]{0.9}96.60& \cellcolor[gray]{0.9}91.40& \cellcolor[gray]{0.9}\textbf{94.85}\\
&  & & \blueup{6.36} & \reddown{0.33} & \blueup{7.84} & \blueup{10.27} & \blueup{5.64} & \blueup{0.23} & \blueup{6.07} & \blueup{9.38} & \blueup{3.52} & \blueup{3.13} & \blueup{0.20} & \blueup{6.30} \\
 
\cline{2-15}
 & \multirow{5}{*}{CIFAR-100-LT} & PromptFL \cite{PromptFL} &65.76&75.61&67.53&54.09&66.44&77.12&68.94&53.18&68.08&80.18 &69.21&54.82 \\
 &  & FedCLIP \cite{fedclip} &63.96	&73.12	&58.78	&55.95	&64.03	&73.10	&59.11	&55.67	&64.16	&72.90	&59.64	&56.05\\
 &  & MaPle + FedAvg &71.65&\textbf{79.85}&72.09&63.00&72.36&\textbf{79.82}&\textbf{74.03}&63.18&73.40&\textbf{83.45}&73.00&63.76\\
 &  &  \cellcolor[gray]{0.9}Ours & \cellcolor[gray]{0.9}\textbf{73.89}& \cellcolor[gray]{0.9}79.12& \cellcolor[gray]{0.9}\textbf{73.41}& \cellcolor[gray]{0.9}\textbf{69.15}& \cellcolor[gray]{0.9}\textbf{74.13}& \cellcolor[gray]{0.9}77.61& \cellcolor[gray]{0.9}73.91& \cellcolor[gray]{0.9}\textbf{70.88}& \cellcolor[gray]{0.9}\textbf{74.93}& \cellcolor[gray]{0.9}79.03& \cellcolor[gray]{0.9}\textbf{74.00}& \cellcolor[gray]{0.9}\textbf{71.79}\\
&  & & \blueup{8.13} & \blueup{3.51} & \blueup{5.8} & \blueup{15.06} & \blueup{7.69} & \blueup{0.49} & \blueup{4.97} & \blueup{17.70} & \blueup{6.85} & \reddown{1.15} & \blueup{4.79} & \blueup{16.97} \\
\cline{2-15}
 & \multirow{5}{*}{Fashion-MNIST-LT} & PromptFL \cite{PromptFL} &78.71 	&96.87 	&88.03 	&58.10 	&77.52 	&97.30 	&88.10 	&54.75 	&79.39 	&97.03 	&88.37 	&59.42 \\
 &  & FedCLIP \cite{fedclip} &76.77	&96.87	&83.95	&47.10	&76.47	&97.37	&82.72	&47.23	&77.71	&94.80	&87.53	&47.53\\
 &  & MaPle + FedAvg &82.33 	&95.17 	&\textbf{93.93} 	&64.00 	&81.95 	&\textbf{98.10} 	&88.50 &64.92 	&85.52 	&96.83 	&\textbf{92.43} 	&71.85 \\
 &  &  \cellcolor[gray]{0.9}Ours & \cellcolor[gray]{0.9}\textbf{83.08} 	& \cellcolor[gray]{0.9}\textbf{97.40} 	& \cellcolor[gray]{0.9}86.50 	& \cellcolor[gray]{0.9}\textbf{69.78} 	& \cellcolor[gray]{0.9}\textbf{84.40} 	& \cellcolor[gray]{0.9}94.73 	& \cellcolor[gray]{0.9}\textbf{92.23} 	& \cellcolor[gray]{0.9}\textbf{70.78} 	& \cellcolor[gray]{0.9}\textbf{86.32} 	& \cellcolor[gray]{0.9}\textbf{97.67} 	& \cellcolor[gray]{0.9}92.30 	& \cellcolor[gray]{0.9}\textbf{73.32} \\
&  & & \blueup{4.37} & \blueup{0.53} & \reddown{1.53} & \blueup{11.68} & \blueup{6.88} & \reddown{2.57} & \blueup{4.13} & \blueup{16.03} & \blueup{6.93} & \blueup{0.64} & \blueup{3.93} & \blueup{13.90} \\
\midrule
\multirow{15}{*}{\rotatebox[origin=c]{90}{\textbf{IF = 10}}} 
 & \multirow{5}{*}{CIFAR-10-LT} & PromptFL \cite{PromptFL} &90.84 	&91.62 	&88.78 	&93.40 	&89.70 	&88.12 	&89.70 	&92.85 	&92.30 	&91.08 	&92.87 	&93.60 \\
 &  & FedCLIP \cite{fedclip} &90.59	&92.8	&87.72	&91.90	&90.12	&93.20	&89.03	&86.15	&90.19	&93.08	&88.73	&87.35\\
 &  & MaPle + FedAvg &93.60 	&\textbf{96.42} 	&90.62 	&93.90 	&93.50 	&\textbf{93.50} 	&\textbf{93.87} 	&92.75 	&94.53 	&\textbf{95.82} 	&93.10 	&94.80 
\\
 &  &  \cellcolor[gray]{0.9}Ours & \cellcolor[gray]{0.9}\textbf{94.12} 	& \cellcolor[gray]{0.9}92.35 	& \cellcolor[gray]{0.9}\textbf{93.65} 	& \cellcolor[gray]{0.9}\textbf{98.60} 	& \cellcolor[gray]{0.9}\textbf{93.73} 	& \cellcolor[gray]{0.9}93.40 	& \cellcolor[gray]{0.9}92.65 	& \cellcolor[gray]{0.9}\textbf{96.55} 	& \cellcolor[gray]{0.9}\textbf{94.66} 	& \cellcolor[gray]{0.9}93.88 	& \cellcolor[gray]{0.9}\textbf{94.30} 	& \cellcolor[gray]{0.9}\textbf{96.95} \\
&  & & \blueup{3.28} & \blueup{0.73} & \blueup{4.87} & \blueup{5.20} & \blueup{4.03} & \blueup{5.28} & \blueup{2.95} & \blueup{3.70} & \blueup{2.36} & \blueup{2.80} & \blueup{1.43} & \blueup{3.35} \\
\cline{2-15}
 & \multirow{5}{*}{CIFAR-100-LT} & PromptFL\cite{PromptFL} &68.93 	&75.02 	&65.49 	&58.82 	&69.98 	&76.92 	&63.74 	&63.24 	&72.06 	&75.54 	&70.26 	&65.94 
\\
&  & FedCLIP\cite{fedclip} &65.07	&69.96	&62.75	&55.64	&64.87	&70.71	&61.26	&55.82	&65.09	&70.79	&62.03	&55.29\\
 &  & MaPle + FedAvg &72.90 	&\textbf{77.79} 	&71.17 	&62.65 	&74.08 	&\textbf{79.79} 	&71.03 	&64.24 	&76.17 	&\textbf{81.67} 	&72.97 	&67.24 
\\
 &  &  \cellcolor[gray]{0.9}Ours & \cellcolor[gray]{0.9}\textbf{76.30} 	& \cellcolor[gray]{0.9}77.06 	& \cellcolor[gray]{0.9}\textbf{76.14} 	& \cellcolor[gray]{0.9}\textbf{74.47} 	& \cellcolor[gray]{0.9}\textbf{77.05} 	& \cellcolor[gray]{0.9}78.04 	& \cellcolor[gray]{0.9}\textbf{76.14} 	& \cellcolor[gray]{0.9}\textbf{76.12} 	& \cellcolor[gray]{0.9}\textbf{77.68} 	& \cellcolor[gray]{0.9}77.54 & \cellcolor[gray]{0.9}\textbf{77.11} 	& \cellcolor[gray]{0.9}\textbf{79.24} 
\\
&  & & \blueup{7.37} & \blueup{2.04} & \blueup{10.65} & \blueup{15.65} & \blueup{7.07} & \blueup{1.12} & \blueup{12.40} & \blueup{12.88} & \blueup{5.62} & \blueup{2.00} & \blueup{6.85} & \blueup{13.30} \\
\cline{2-15}
 & \multirow{5}{*}{Fashion-MNIST-LT} & PromptFL\cite{PromptFL} &79.39 	&92.45 	&83.15 	&45.75 	&80.10 	&94.55 	&76.65 	&58.10 	&81.51 	&94.90 &82.62& 52.50 
\\
&  & FedCLIP\cite{fedclip} &76.33	&92.7	&83.13	&30.00	&78.02	&93.53	&81.65	&39.75	&78.8	&95.08	&79.55	&44.75\\
 &  & MaPle + FedAvg &82.55 	&\textbf{95.97} 	&\textbf{85.72} 	&49.35 	&85.52 	&\textbf{95.60} 	&\textbf{89.05} 	&58.30 	&87.23 	&\textbf{96.73} 	&88.00 	&66.70 
\\
 &  & \cellcolor[gray]{0.9}Ours &\cellcolor[gray]{0.9}\textbf{84.36} 	&\cellcolor[gray]{0.9}95.62 	&\cellcolor[gray]{0.9}83.25 	&\cellcolor[gray]{0.9}\textbf{64.05} 	&\cellcolor[gray]{0.9}\textbf{86.06} 	&\cellcolor[gray]{0.9}95.23 	&\cellcolor[gray]{0.9}86.45 	&\cellcolor[gray]{0.9}\textbf{66.95} 	&\cellcolor[gray]{0.9}\textbf{87.48} 	&\cellcolor[gray]{0.9}96.50 	&\cellcolor[gray]{0.9}\textbf{88.67} 	&\cellcolor[gray]{0.9}\textbf{67.05}
\\
&  & & \blueup{4.97} & \blueup{3.17} & \blueup{0.10} & \blueup{18.30} & \blueup{5.96} & \blueup{0.68} & \blueup{9.80} & \blueup{8.85} & \blueup{5.97} & \blueup{1.60} & \blueup{6.05} & \blueup{14.55} \\
\bottomrule
\end{tabular}
}
\label{tab:mainResult}
\end{table*}

\section{Experimental Details}
\label{app:Experimental Details}
\paragraph{Datasets.} We evaluate our method on four long-tailed datasets: CIFAR-10-LT, CIFAR-100-LT, Fashion-MNIST-LT, and ImageNet-LT. Following \cite{Creff}, we create long-tailed versions of CIFAR-10, CIFAR-100 \cite{cifar}, and Fashion-MNIST \cite{fmnist} by sampling with an exponential decay controlled by the imbalance factor $\rho = N{max} / N{min}$, where $N{max}$ and $N{min}$ are the number of samples in the most and least frequent classes, respectively. We use imbalance factors of 100, 50, and 10 for CIFAR-10-LT, CIFAR-100-LT, and Fashion-MNIST-LT. For ImageNet-LT, we use the version described in \cite{imagenet-lt}, with class sizes ranging from 5 to 1,280 samples. All datasets maintain their original test sets. Following \cite{Fedloge}, we categorize classes into Head, Mid, and Tail groups by first sorting all classes based on their sample count, then setting two thresholds (75\% and 95\%) to segregate them. We report accuracy across these three groups to provide a comprehensive evaluation of our method's performance on varying levels of data scarcity.
\vspace{-10pt}
\paragraph{Implementation Details.} We implemented our proposed method, CAPT, as well as all baselines using PyTorch. We report the performance on two representative and influential backbone architectures, ResNet50 \cite{resnet50} and ViT-B16 \cite{vitb16}.
\begin{itemize}
\item In the federated learning setting, we simulate a system with 20 clients, where the local datasets are partitioned using a symmetric Dirichlet distribution with a concentration parameter $\alpha \in \{0.05, 0.1, 0.5\}$ to control the degree of non-IID data. Smaller values of $\alpha$ indicate higher data heterogeneity across clients.
\item For local training, we use the SGD optimizer with a constant learning rate of 1e-3. The batch size is set to 32 for CIFAR-10-LT, CIFAR-100-LT, and Fashion-MNIST-LT, and 256 for ImageNet-LT. We perform 100 rounds of communication training, with 1 local epoch per round for CLIP-based methods and 5 local epochs per round in other cases. In each round, 40\% of the clients are randomly selected to participate in the training.
\end{itemize}
For the prompt-tuning baselines (CoCoop, KgCoOp, MaPle, and CLIP-LoRA), we use the same hyperparameters reported in the original papers. The prompt length is set to 4, and the learning rate for prompt optimization is 1e-3, based on the original papers' recommendations and our empirical findings. For FedCLIP, we use an attention-based adapter with a hidden dimension of 128.

\section{Additional Experiments Results}
\label{app:AER}
To comprehensively evaluate CAPT's effectiveness in handling varying degrees of data imbalance and client heterogeneity, we conduct extensive experiments across different imbalance factors (IF = 10, 50, 100) and heterogeneity levels ($\alpha = 0.05, 0.1, 0.5$) on three benchmark datasets (CIFAR-10-LT, CIFAR-100-LT, and Fashion-MNIST-LT), as shown in~\cref{tab:mainResult}. The results demonstrate several key findings about our method's robustness and effectiveness.

CAPT consistently outperforms baseline methods across all settings, with particularly significant improvements in tail class performance. This advantage is most pronounced in scenarios with high imbalance factors (IF = 100) and high data heterogeneity ($\alpha = 0.05$), where CAPT achieves substantial improvements in tail class accuracy while maintaining competitive performance on head classes. 

Notably, we observe that the performance gains of CAPT become more pronounced as client heterogeneity increases. For instance, when $\alpha = 0.05$, the improvements in tail class accuracy are consistently larger compared to scenarios with $\alpha = 0.5$, demonstrating that our method is particularly effective in handling challenging scenarios with high client heterogeneity.

As the imbalance factor decreases from 100 to 10, we observe that while the performance gap between head and tail classes naturally narrows for all methods, CAPT maintains its advantage in both overall and tail class performance. This trend is consistent across different datasets and heterogeneity levels, demonstrating the robustness of our class-aware prompt design and heterogeneity-aware clustering strategy. Notably, CAPT's performance remains stable even under high data heterogeneity, suggesting that our method effectively addresses both class imbalance and non-IID challenges simultaneously.

The improvements are particularly pronounced in more complex datasets like CIFAR-100-LT, where the challenge of learning discriminative features for tail classes is compounded by the larger number of classes. In these scenarios, CAPT's dual-prompt mechanism proves especially effective at capturing both shared and class-specific features, leading to more balanced performance across all class categories.

\cref{tab:resnet_comparison} presents a comprehensive comparison of CAPT with state-of-the-art federated learning methods tailored to handle class imbalance using a ResNet backbone. Despite these methods being highly specialized for addressing long-tailed data distributions, CAPT significantly outperforms all baselines. These results highlight CAPT's ability to effectively learn from imbalanced data distributions in federated settings, surpassing even the most advanced methods designed specifically for this challenge.

% resnet results
\begin{table}[!t]
\vspace{-5pt}
\caption{Performance comparison of federated long-tail learning methods with all approaches using ResNet backbone.}
\vspace{-5pt}
\centering
\resizebox{0.95\columnwidth}{!}{%
\begin{tabular}{l|ccc}
\toprule
\textbf{Methods} & \textbf{CIFAR-10-LT} & \textbf{CIFAR-100-LT} & \textbf{ImageNet-LT} \\
\midrule
\multicolumn{4}{l}{\textit{Training from Scratch:}} \\
FedAvg~\cite{Fedavg} & 58.22 & 30.47 & 23.85 \\
CReFF~\cite{Creff} & 70.31 & 32.90 & 25.98 \\
Fed-Grab~\cite{Fed-Grab} & 70.51 & 36.53 & 32.94 \\
FedLoge~\cite{Fedloge} & 70.54 & 42.72 & 35.62 \\
RUCR~\cite{RUCR} & 71.40 & 35.58 & 24.02 \\
\midrule
\multicolumn{4}{l}{\textit{Finetuning-based:}} \\
PromptFL~\cite{PromptFL} & 62.68 & 45.77 & 58.69 \\
\midrule
\rowcolor{gray!15} \textbf{Ours} & \textbf{81.28} & \textbf{53.83} & \textbf{64.19} \\
\bottomrule
\end{tabular}
}
\label{tab:resnet_comparison}
\end{table}

\begin{table}[!t]
\caption{Impact of Different Similarity Metrics on CAPT Performance}
\label{tab:similarity_metrics}
\centering
\renewcommand{\arraystretch}{1.2}
\resizebox{0.95\columnwidth}{!}{%
\begin{tabular}{l|cc|cc}
\toprule
\multirow{2}{*}{\textbf{Similarity Metric}} & \multicolumn{2}{c|}{\textbf{CIFAR-10-LT}} & \multicolumn{2}{c}{\textbf{CIFAR-100-LT}} \\
\cmidrule{2-5}
& Overall & Tail & Overall & Tail \\
\midrule
 JS Divergence & 93.64 & 94.68 & 73.77 & 70.95 \\
 KL Divergence & 93.29 & 93.42 & 74.11 & 70.72 \\
 Wasserstein & 93.95 & 93.56 & 73.58 & 69.97 \\
 Total Variation & 93.48 & 94.44 & 73.96 & 70.80 \\
 Cosine & 93.55 & 94.34 & 73.93 & 70.20 \\
\bottomrule
\end{tabular}
}
\end{table}

\begin{table}[!t]
\caption{Impact of Different Clustering Techniques on CAPT Performance}
\label{tab:clustering_methods}
\centering
\renewcommand{\arraystretch}{1.2}
\resizebox{0.95\columnwidth}{!}{%
\begin{tabular}{l|cc|cc}
\toprule
\multirow{2}{*}{\textbf{Clustering Method}} & \multicolumn{2}{c|}{\textbf{CIFAR-10-LT}} & \multicolumn{2}{c}{\textbf{CIFAR-100-LT}} \\
\cmidrule{2-5}
& Overall & Tail & Overall & Tail \\
\midrule
 DBSCAN & 93.34 & 92.10 & 73.40 & 68.70 \\
 GMM & 93.45 & 92.86 & 73.34 & 69.00 \\
 Hierarchical & 93.35 & 91.90 & 73.34 & 68.28 \\
 K-means & 93.49 & 92.92 & 73.59 & 68.88 \\
 Spectral & 93.22 & 91.96 & 73.58 & 68.10 \\
\bottomrule
\end{tabular}
}
\end{table}

\section{Additional Ablation Studies}
\label{app:AAS}
\paragraph{Impact of Similarity Metrics.}
We investigate the influence of different similarity metrics on CAPT's performance across both CIFAR-10-LT and CIFAR-100-LT datasets, as shown in~\cref{tab:similarity_metrics}. The results demonstrate that CAPT maintains robust performance across various similarity measures. On CIFAR-10-LT, the Wasserstein distance \cite{Wasserstein} achieves the highest overall accuracy of 93.95\%, while JS divergence excels in tail class performance. For CIFAR-100-LT, KL divergence leads in overall accuracy is 74.11\%, with JS divergence again showing strong tail class performance. Notably, the performance variation across different metrics is minimal, suggesting that CAPT's effectiveness is not heavily dependent on the choice of similarity measure. This stability can be attributed to our dual-prompt mechanism's ability to capture both domain-invariant and class-specific features effectively, regardless of the underlying similarity metric.

\vspace{-10pt}
\paragraph{Analysis of Clustering Techniques.}
To further validate CAPT's robustness, we examine the impact of different clustering algorithms on model performance. As presented in~\cref{tab:clustering_methods}, experimental results reveal consistent performance across various clustering methods, with k-means achieving the highest overall accuracy on both datasets (93.49\% on CIFAR-10-LT, 73.59\% on CIFAR-100-LT). The GMM approach \cite{GMM} demonstrates particular strength in handling tail classes, achieving 92.86\% and 69.00\% tail accuracy on CIFAR-10-LT and CIFAR-100-LT, respectively. This comprehensive evaluation demonstrates that CAPT's heterogeneity-aware strategy maintains its effectiveness across different clustering paradigms, with k-means offering the best balance between computational efficiency and performance.
\vspace{-10pt}
\paragraph{Impact of Cluster Numbers.} To investigate the optimal number of clusters for both similarity-based and heterogeneity-based clustering strategies, we conduct a comprehensive grid search across different combinations of cluster numbers on CIFAR-10-LT and CIFAR-100-LT datasets. Figure~\ref{fig:cluster_numbers1} presents the overall accuracy for varying numbers of heterogeneity clusters (y-axis) and similarity clusters (x-axis). On CIFAR-10-LT, we observe that the optimal performance of 93.78\% is achieved with 3 similarity clusters and 4 heterogeneity clusters, while CIFAR-100-LT achieves the best accuracy of 73.01\% with the same configuration. Our analysis reveals that overly small cluster numbers (1-2) fail to effectively capture the heterogeneity patterns across clients, resulting in suboptimal performance. Conversely, excessive clustering (5-6) leads to fragmentation of limited tail class data without performance benefits. The moderate cluster numbers (3-4) provide the optimal balance between capturing client diversity and maintaining effective knowledge sharing.

\begin{figure}[t]
    \centering
    \begin{subfigure}[b]{0.48\columnwidth}
        \centering
        \includegraphics[width=\textwidth]{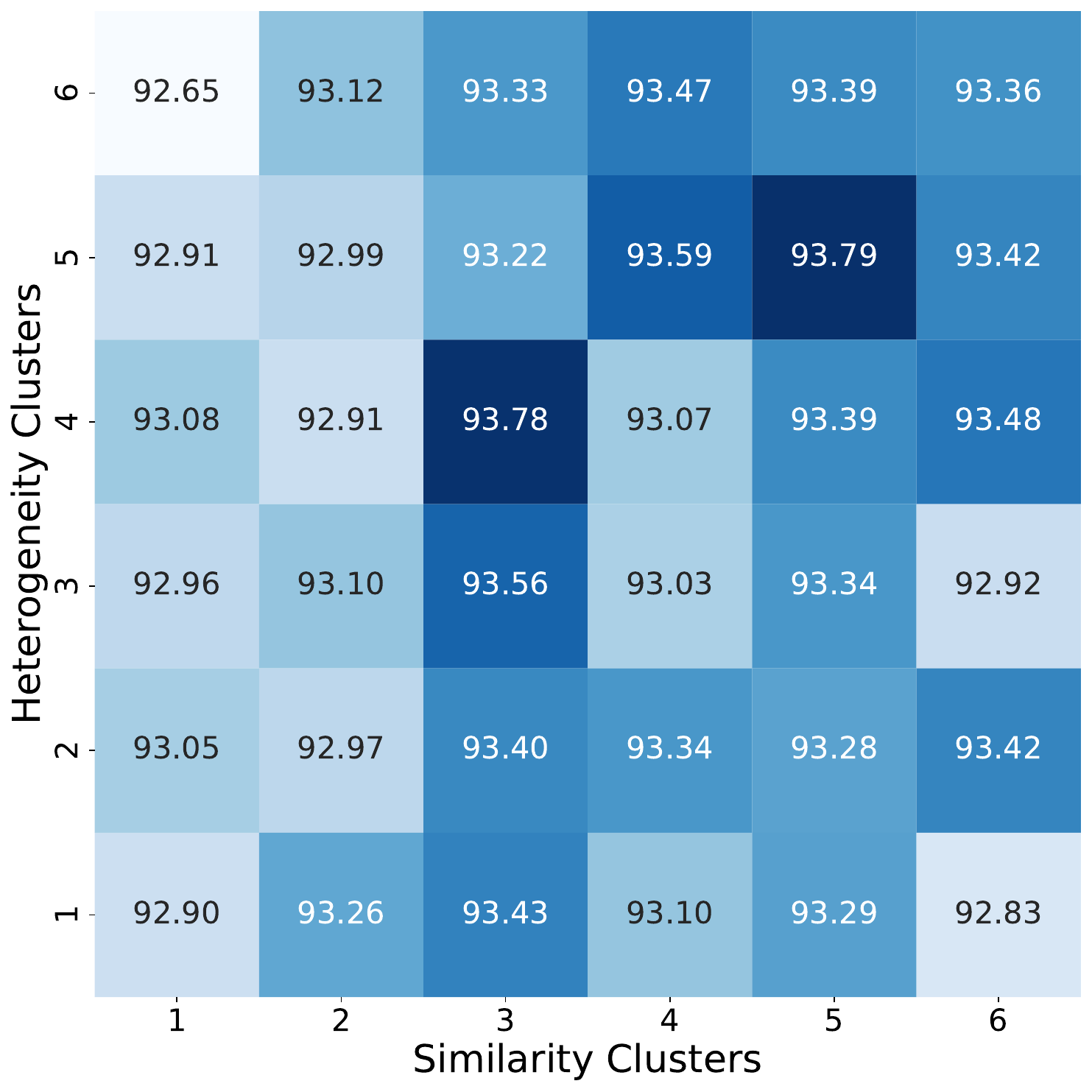}
        \vspace{-10pt}
        \caption{CIFAR-10-LT}
        % \label{fig:simresults}
    \end{subfigure}
    \hfill
    \begin{subfigure}[b]{0.48\columnwidth}
        \centering
        \includegraphics[width=\textwidth]{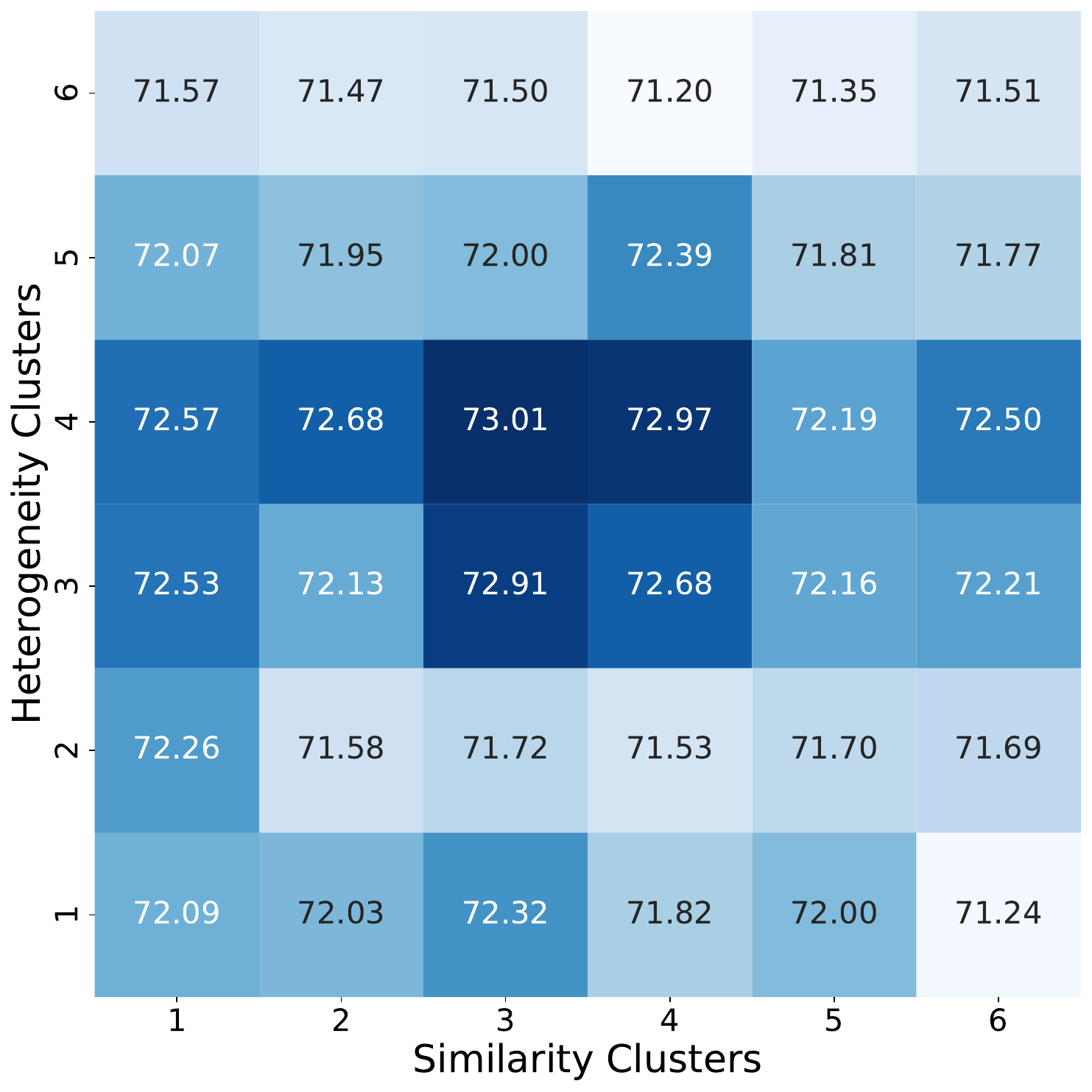}
        \vspace{-10pt}
        \caption{CIFAR-100-LT}
        % \label{fig:disresults}
    \end{subfigure}
    \caption{Overall accuracy (\%) comparison with different numbers of similarity clusters and dissimilarity clusters on (a) CIFAR-10-LT and (b) CIFAR-100-LT. Darker blue indicates higher accuracy. The results demonstrate that moderate cluster numbers (3-4) generally provide optimal performance across both datasets.}
    \vspace{-10pt}
    \label{fig:cluster_numbers1}
\end{figure}

\subsection{Visualization of Client Clustering}
To validate the effectiveness of our proposed clustering strategies, we visualize the class distribution patterns across different clients using heatmaps, as shown in~\cref{fig:clustering}. The similarity-based clustering (\cref{fig:simresults}) effectively groups clients with similar class distributions, enabling focused learning within each cluster. In contrast, the heterogeneity-based clustering (\cref{fig:disresults}) successfully forms complementary client groups with diverse class coverage, facilitating balanced knowledge sharing across the entire class distribution. 

\begin{figure}[t]
    \centering
    \begin{subfigure}[b]{0.48\columnwidth}
        \centering
        \includegraphics[width=\textwidth]{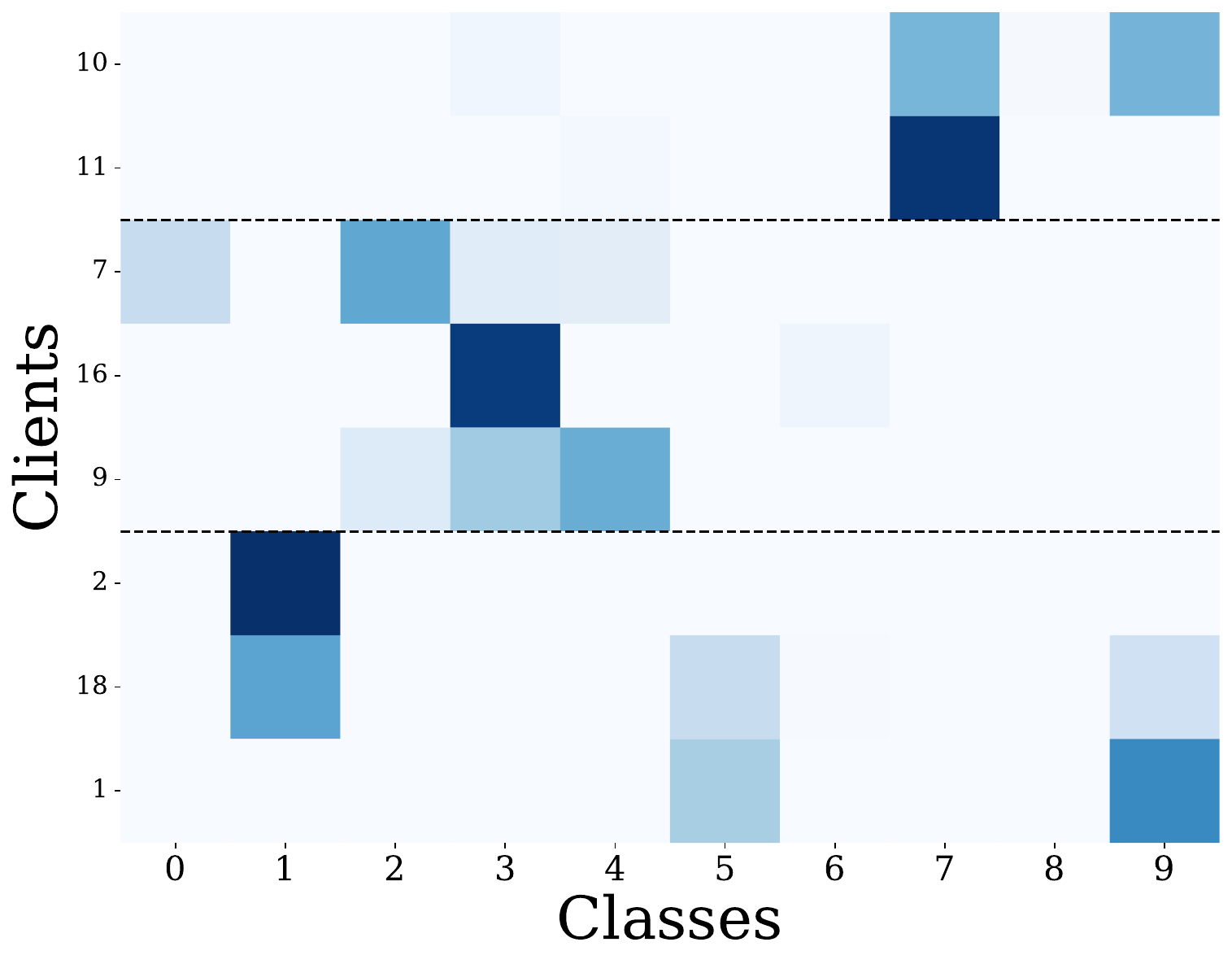}
        \vspace{-10pt}
        \caption{Similarity cluster result}
        \label{fig:simresults}
    \end{subfigure}
    \hfill
    \begin{subfigure}[b]{0.48\columnwidth}
        \centering
        \includegraphics[width=\textwidth]{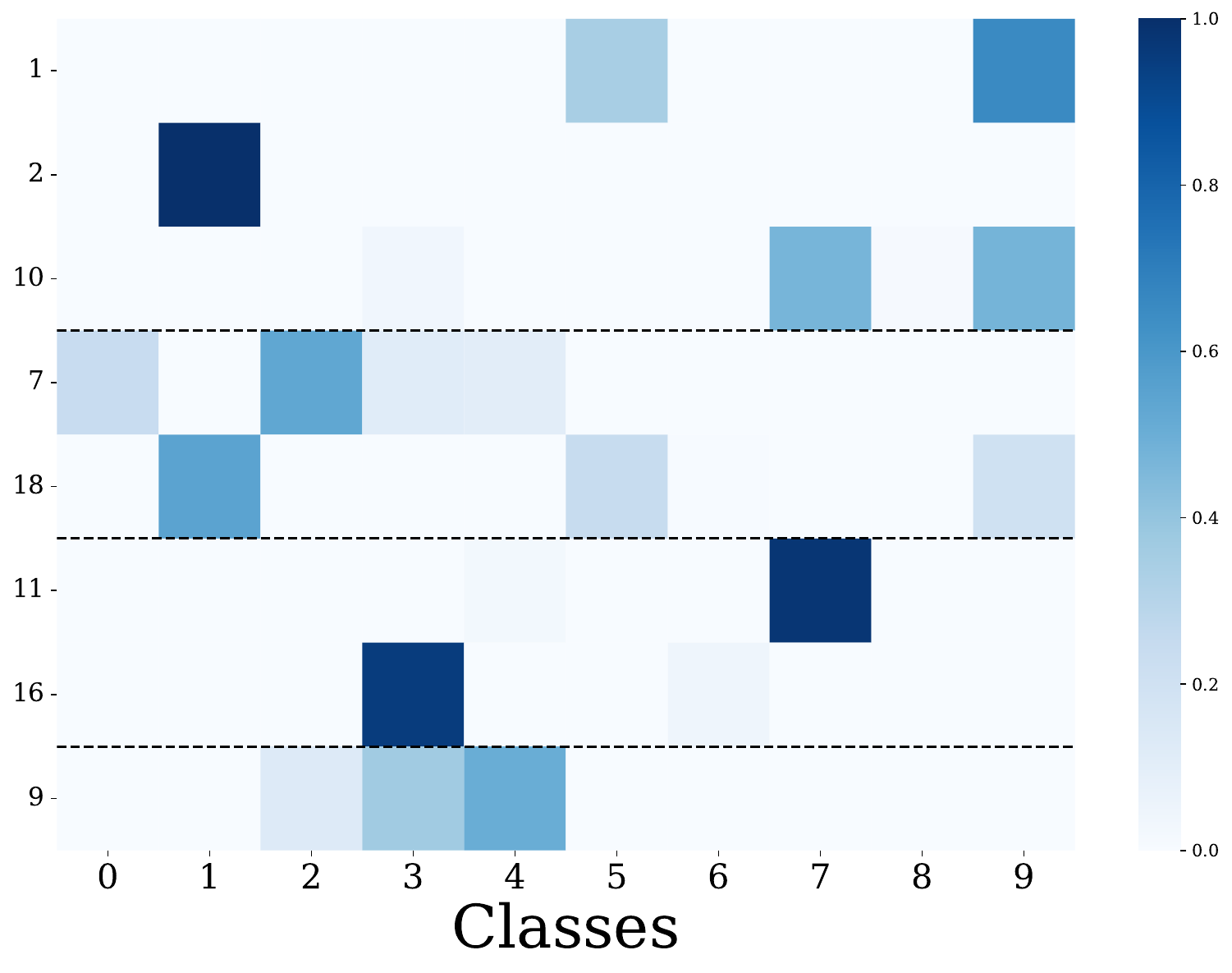}
        \vspace{-10pt}
        \caption{heterogeneity cluster result}
        \label{fig:disresults}
    \end{subfigure}
    \caption{Visualization of client clustering results on CIFAR-10 with non-IID setting ($\alpha=0.05$). The heatmaps show class distribution patterns where darker colors indicate higher proportions. Dashed lines separate different clusters.}
    \vspace{-10pt}
    \label{fig:clustering}
\end{figure}

%——————————————————————————————————————————————————————————————————————————————————

\section{Discussion of privacy}
\label{app:Privacy}
Our heterogeneity-aware client clustering strategy utilizes the local label proportions from each client. Although sharing class distributions may potentially disclose aggregated information about clients’ data, this practice is not uncommon in federated learning methods designed for long-tailed data~\cite{Fed-Grab,RUCR}. These distributions are aggregated in an anonymous and irreversible manner, ensuring that the data distribution of individual clients remains unexposed. Moreover, like all federated learning algorithms, CAPT is susceptible to general privacy risks, such as gradient inversion attacks~\cite{Gradients}. However, these risks are not unique to CAPT’s design but are inherent to the federated learning framework~\cite{gradientFL}. The approach proposed by PromptFL~\cite{PromptFL} demonstrates that sharing gradients of prompts, rather than full model parameters, can mitigate privacy leakage risks. Specifically, gradient inversion attacks on prompt-based methods fail to reconstruct meaningful data, whereas traditional model-based federated learning is more vulnerable to such attacks. Given that the primary focus of this study is to address the challenges of federated long-tailed learning, an in-depth exploration of privacy mechanisms is beyond the scope of this work, we briefly include the discussion in this subsection.

%——————————————————————————————————————————————————————————————————————————————————

\section{Theoretical Analysis}
\label{app:TA}
In this section, we present a rigorous theoretical analysis to demonstrate why traditional prompt tuning methods face challenges in FL with non-IID and long-tailed data. Specifically, we show that such data exacerbates gradient variance, leading to decreased accuracy and hindered convergence.

\subsection{Problem Formulation}

We consider a federated learning system with $K$ clients and a central server. Each client $k \in \{1, 2, \dots, K\}$ possesses a local dataset $D_k = \{(\mathbf{x}_i^k, y_i^k)\}_{i=1}^{n_k}$ drawn from a local data distribution $\mathcal{D}_k$, where $\mathbf{x}_i^k \in \mathbb{R}^d$ is an input sample, $y_i^k \in \{1, 2, \dots, C\}$ is the corresponding label, and $n_k = |D_k|$ is the number of samples on client $k$. The global dataset is $D = \bigcup_{k=1}^K D_k$.

\begin{definition}[Imbalance Ratio]
The \emph{imbalance ratio} $\rho$ is defined as the ratio between the maximum and minimum class sample sizes across all clients:
\begin{equation}
    \rho = \frac{\max_{c} \{ n_c \}}{ \min_{c} \{ n_c \}},
\end{equation}
where $n_c = \sum_{k=1}^K n_c^k$ is the total number of samples for class $c$, and $n_c^k$ is the number of samples of class $c$ on client $k$.
\end{definition}

\begin{definition}[Long-Tailed Distribution]
A dataset exhibits a \emph{long-tailed distribution} if the class sample sizes $\{ n_c \}_{c=1}^C$ follow an exponential decay:
\begin{equation}
    n_c = n_{\max} e^{ -\alpha c },
\end{equation}
where $n_{\max}$ is the maximum class sample size, $\alpha > 0$ is the decay rate, and classes are ordered such that $n_1 \geq n_2 \geq \dots \geq n_C$.
\end{definition}

Following PromptFL\cite{PromptFL}, we make the standard assumptions:

\begin{assumption}[Smoothness]
\label{ass:Smoothness}
The loss function $\ell(\mathbf{x}, y; \mathbf{P}_s)$ is $L$-smooth with respect to $\mathbf{P}_s$:
\begin{equation*}
    \|\nabla \ell(\mathbf{x}, y; \mathbf{P}_s) - \nabla \ell(\mathbf{x}, y; \mathbf{P}_s')\| \leq L \|\mathbf{P}_s - \mathbf{P}_s'\|.
\end{equation*}
\end{assumption}

\begin{assumption}[Bounded Local Variance]
\label{ass:BoundedLocalVariance}
The variance of stochastic gradients on each client is bounded:
\begin{equation*}
   \mathbb{E}_{(\mathbf{x}, y) \sim \mathcal{D}_k} \left[ \|\nabla \ell(\mathbf{x}, y; \mathbf{P}_s) - \nabla F_k(\mathbf{P}_s)\|^2 \right] \leq \sigma_k^2,
\end{equation*}
where $\nabla F_k(\mathbf{P}_s) = \mathbb{E}_{(\mathbf{x}, y) \sim \mathcal{D}_k} \left[ \nabla \ell(\mathbf{x}, y; \mathbf{P}_s) \right]$ is the local gradient on client $k$.
\end{assumption}

\begin{assumption}[Gradient Diversity]
\label{ass:GradientDiversity}
The gradient dissimilarity among clients is quantified by:
\begin{equation*}
   \delta_s^2 = \frac{1}{K} \sum_{k=1}^K \left\| \nabla F_k(\mathbf{P}_s) - \nabla F(\mathbf{P}_s) \right\|^2,
\end{equation*}
where $\nabla F(\mathbf{P}_s) = \frac{1}{K} \sum_{k=1}^K \nabla F_k(\mathbf{P}_s)$ is the global gradient.
\end{assumption}

\subsection{Lemmas and Proofs}

\begin{lemma}[Gradient Variance Decomposition]
\label{lemma:gradient_variance}
The variance of the global gradient estimator can be decomposed into the sum of the within-client variance and the between-client variance:
\begin{align*}
&\mathbb{E} \left[ \left| \nabla \ell(\mathbf{x}, y; \mathbf{P}_s) - \nabla F(\mathbf{P}s) \right|^2 \right]\\
&=\frac{1}{K} \sum_{k=1}^K \left( \sigma_k^2 + \left\| \nabla F_k(\mathbf{P}_s) - \nabla F(\mathbf{P}_s) \right\|^2 \right).
\end{align*}
\end{lemma}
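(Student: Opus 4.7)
The plan is to recognize this as a standard law-of-total-variance style decomposition, where the randomness on the left-hand side comes from two nested sources: first a uniform draw of a client index $k \in \{1, \dots, K\}$, then a draw $(\mathbf{x}, y) \sim \mathcal{D}_k$. Once the expectation is interpreted this way, the identity follows by adding and subtracting the local mean $\nabla F_k(\mathbf{P}_s)$ inside the norm and exploiting that cross-terms vanish in conditional expectation.

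Concretely, I would proceed in the following steps. First, I would write
\begin{equation*}
\mathbb{E}\!\left[\|\nabla \ell(\mathbf{x}, y; \mathbf{P}_s) - \nabla F(\mathbf{P}_s)\|^2\right] = \frac{1}{K}\sum_{k=1}^K \mathbb{E}_{(\mathbf{x}, y)\sim \mathcal{D}_k}\!\left[\|\nabla \ell(\mathbf{x}, y; \mathbf{P}_s) - \nabla F(\mathbf{P}_s)\|^2\right],
\end{equation*}
making the client-sampling step explicit. Second, for each fixed $k$, I would insert $\pm\,\nabla F_k(\mathbf{P}_s)$ inside the norm and expand:
\begin{equation*}
\|\nabla \ell - \nabla F\|^2 = \|\nabla \ell - \nabla F_k\|^2 + \|\nabla F_k - \nabla F\|^2 + 2\langle \nabla \ell - \nabla F_k,\; \nabla F_k - \nabla F\rangle.
\end{equation*}
Third, I would take $\mathbb{E}_{(\mathbf{x}, y)\sim \mathcal{D}_k}$ of both sides: the cross-term vanishes because $\mathbb{E}_{(\mathbf{x},y)\sim \mathcal{D}_k}[\nabla \ell(\mathbf{x},y;\mathbf{P}_s)] = \nabla F_k(\mathbf{P}_s)$ by the definition of $\nabla F_k$ in Assumption~\ref{ass:BoundedLocalVariance}, and the factor $\nabla F_k(\mathbf{P}_s) - \nabla F(\mathbf{P}_s)$ is deterministic with respect to this inner expectation. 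This leaves exactly the within-client second moment $\mathbb{E}_{(\mathbf{x},y)\sim\mathcal{D}_k}[\|\nabla \ell - \nabla F_k\|^2]$ (identified with $\sigma_k^2$) plus the between-client term $\|\nabla F_k(\mathbf{P}_s) - \nabla F(\mathbf{P}_s)\|^2$. Fourth, averaging over $k$ yields the claimed identity.

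I do not expect any serious technical obstacle; the argument is essentially an instance of the bias-variance / total-variance decomposition applied to the two-stage sampling scheme. The one point that requires care is the interpretation of $\sigma_k^2$: Assumption~\ref{ass:BoundedLocalVariance} phrases it as an upper bound rather than the exact within-client variance, so strictly speaking the derivation produces an equality with the true variance $\mathrm{Var}_{(\mathbf{x},y)\sim\mathcal{D}_k}[\nabla \ell(\mathbf{x},y;\mathbf{P}_s)]$ and an inequality $\leq$ once one replaces this variance with its upper bound $\sigma_k^2$. I would either reinterpret $\sigma_k^2$ as the actual local variance to recover a clean equality, or state the lemma with a $\leq$ sign, and flag this explicitly when connecting the decomposition to the downstream convergence result that uses Assumption~\ref{ass:GradientDiversity} to control the between-client term by $\delta_s^2$.
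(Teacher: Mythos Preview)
Your proposal is correct and follows essentially the same approach as the paper: both invoke the law of total variance, condition on the client index, add and subtract $\nabla F_k(\mathbf{P}_s)$, and use $\mathbb{E}_{(\mathbf{x},y)\sim\mathcal{D}_k}[\nabla\ell - \nabla F_k] = 0$ to kill the cross-term. Your remark that Assumption~\ref{ass:BoundedLocalVariance} only gives an upper bound on $\sigma_k^2$ (so the equality should strictly be an inequality, or $\sigma_k^2$ reinterpreted as the true local variance) is a valid subtlety that the paper glosses over.
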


\begin{proof}
By the law of total variance, for any random variable $X$:
\begin{equation*}
    \mathrm{Var}(X) = \mathbb{E} \left[ \mathrm{Var}(X | Y) \right] + \mathrm{Var} \left( \mathbb{E}[X | Y] \right).
\end{equation*}
Applying this to the stochastic gradient $\nabla \ell(\mathbf{x}, y; \mathbf{P}_s)$, we have:
\begin{align*}
    &\mathbb{E} \left[ \left\| \nabla \ell(\mathbf{x}, y; \mathbf{P}_s) - \nabla F(\mathbf{P}_s) \right\|^2 \right] \\
    &= \frac{1}{K} \sum_{k=1}^K \mathbb{E}_{(\mathbf{x}, y) \sim \mathcal{D}_k} \left[ \left\| \nabla \ell(\mathbf{x}, y; \mathbf{P}_s) - \nabla F(\mathbf{P}_s) \right\|^2 \right] \\
    &= \frac{1}{K} \sum_{k=1}^K \left( \mathbb{E}_{(\mathbf{x}, y) \sim \mathcal{D}_k} \left[ \left\| \nabla \ell(\mathbf{x}, y; \mathbf{P}_s) - \nabla F_k(\mathbf{P}_s) \right\|^2 \right]\right. \\
    &\quad \left.+ \left\| \nabla F_k(\mathbf{P}_s) - \nabla F(\mathbf{P}_s) \right\|^2 \right) \\
    &= \frac{1}{K} \sum_{k=1}^K \left( \sigma_k^2 + \left\| \nabla F_k(\mathbf{P}_s) - \nabla F(\mathbf{P}_s) \right\|^2 \right).
\end{align*}
In the second equality, we used the fact that:
\begin{equation*}
    \mathbb{E}_{(\mathbf{x}, y) \sim \mathcal{D}_k} \left[ \nabla \ell(\mathbf{x}, y; \mathbf{P}_s) \right] = \nabla F_k(\mathbf{P}_s).
\end{equation*}
Moreover, the cross term vanishes because:
\begin{equation*}
    \mathbb{E}_{(\mathbf{x}, y) \sim \mathcal{D}_k} \left[ \nabla \ell(\mathbf{x}, y; \mathbf{P}_s) - \nabla F_k(\mathbf{P}_s) \right] = 0.
\end{equation*}
\end{proof}

\begin{lemma}[Impact of Class Imbalance on Gradient Discrepancy]
\label{lemma:class_imbalance}
In federated learning with non-IID long-tailed data and imbalance ratio $\rho$, the squared norm of the gradient discrepancy between client $k$ and the global model is bounded by:
\begin{equation*}
    \left\| \nabla F_k(\mathbf{P}_s) - \nabla F(\mathbf{P}_s) \right\|^2 \leq G^2 \sum_{c=1}^C \left( \pi_c^k - \pi_c \right)^2,
\end{equation*}
where $\pi_c^k$ and $\pi_c$ are the class proportions on client $k$ and globally, respectively, and $G$ is an upper bound on the norm of class-specific gradients.
\end{lemma}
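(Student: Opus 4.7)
}
The plan is to reduce the gradient discrepancy to a class-mixture discrepancy by exploiting the hierarchical structure of the loss: the expected gradient on any distribution can be written as a convex combination, over classes, of class-conditional expected gradients. The key simplifying assumption I would make explicit is that the class-conditional expected gradient is essentially shared across clients, i.e.\ $\mathbb{E}_{\mathbf{x} \sim \mathcal{D}_k(\cdot \mid y=c)}[\nabla \ell(\mathbf{x}, c; \mathbf{P}_s)] = \mathbf{g}_c$ for all $k$. This is the natural ``IID-within-class'' assumption used in federated long-tailed analyses, and it isolates the effect of label-distribution skew from the effect of feature skew.

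Under that decomposition, I would write
\begin{equation*}
    \nabla F_k(\mathbf{P}_s) = \sum_{c=1}^C \pi_c^k \, \mathbf{g}_c, \qquad \nabla F(\mathbf{P}_s) = \sum_{c=1}^C \pi_c \, \mathbf{g}_c,
\end{equation*}
so that the discrepancy telescopes into
\begin{equation*}
    \nabla F_k(\mathbf{P}_s) - \nabla F(\mathbf{P}_s) = \sum_{c=1}^C \bigl(\pi_c^k - \pi_c\bigr)\, \mathbf{g}_c .
\end{equation*}
Taking norms, applying the triangle inequality followed by Cauchy--Schwarz on the product $|\pi_c^k - \pi_c|\cdot \|\mathbf{g}_c\|$, and then invoking the uniform class-gradient bound $\|\mathbf{g}_c\| \leq G$ yields
\begin{equation*}
    \bigl\|\nabla F_k(\mathbf{P}_s) - \nabla F(\mathbf{P}_s)\bigr\|^2 \;\leq\; G^2 \sum_{c=1}^C \bigl(\pi_c^k - \pi_c\bigr)^2,
\end{equation*}
up to constants depending on how Cauchy--Schwarz is applied; I would absorb any such constant into $G$, as the lemma statement does.

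The main obstacle is not the algebra but the justification of the shared class-conditional gradient assumption, which is what allows the $\mathbf{g}_c$ factors to be pulled outside the summation cleanly. In a strict sense the client-specific class-conditional gradients differ, and a fully rigorous version would add a ``feature-shift'' residual $\sum_c \pi_c^k(\mathbf{g}_c^k - \mathbf{g}_c)$ that needs to be bounded separately (e.g.\ by an intra-class variance term). For the purposes of this lemma, whose role is to isolate the label-imbalance contribution to gradient diversity $\delta_s^2$ in Assumption \ref{ass:GradientDiversity}, I would state the shared-conditional assumption explicitly at the top of the proof and defer feature-shift effects to the variance term $\sigma_k^2$ of Assumption \ref{ass:BoundedLocalVariance}, so that the two sources of heterogeneity remain cleanly separated in the downstream convergence argument.
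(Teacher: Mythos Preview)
Your proposal is correct and follows essentially the same route as the paper: both decompose $\nabla F_k$ and $\nabla F$ as class-proportion-weighted sums of shared class-conditional gradients $\nabla F_c(\mathbf{P}_s)$, subtract, apply the triangle inequality, bound each $\|\nabla F_c\|$ by $G$, and then use a Cauchy--Schwarz/$(\sum a_i)^2 \le C\sum a_i^2$ step that introduces a factor of $C$ which is absorbed (the paper alternatively drops it via an ``uncorrelated gradients'' remark). Your explicit discussion of the shared class-conditional gradient assumption is in fact more careful than the paper's own proof, which simply writes the decomposition without justifying it.
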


\begin{proof}
We express the local and global gradients in terms of class proportions:
\begin{align*}
     &\nabla F_k(\mathbf{P}_s) = \sum_{c=1}^C \pi_c^k \nabla F_c(\mathbf{P}_s), \\
     & \nabla F(\mathbf{P}_s) = \sum_{c=1}^C \pi_c \nabla F_c(\mathbf{P}_s),
\end{align*}
where:
\begin{itemize}
    \item $\pi_c^k = \frac{n_c^k}{n_k}$ is the proportion of class $c$ on client $k$.
    \item $\pi_c = \frac{n_c}{n}$ is the global proportion of class $c$.
    \item $\nabla F_c(\mathbf{P}_s) = \mathbb{E}_{(\mathbf{x}, y) \sim \mathcal{D}_c} \left[ \nabla \ell(\mathbf{x}, y; \mathbf{P}_s) \right]$ is the gradient with respect to class $c$.
\end{itemize}
Then, the gradient discrepancy taking the squared norm:
\begin{align*}
    \left\| \nabla F_k(\mathbf{P}_s) - \nabla F(\mathbf{P}_s) \right\|^2 &= \left\| \sum_{c=1}^C \left( \pi_c^k - \pi_c \right) \nabla F_c(\mathbf{P}_s) \right\|^2\\
    &\leq \left( \sum_{c=1}^C \left| \pi_c^k - \pi_c \right| \left\| \nabla F_c(\mathbf{P}_s) \right\| \right)^2.
\end{align*}
Assuming that the class-specific gradients are bounded, i.e., $\left\| \nabla F_c(\mathbf{P}_s) \right\| \leq G$, we have:
\begin{equation*}
    \left\| \nabla F_k(\mathbf{P}_s) - \nabla F(\mathbf{P}_s) \right\|^2 \leq G^2 \left( \sum_{c=1}^C \left| \pi_c^k - \pi_c \right| \right)^2.
\end{equation*}
Using the inequality $\left( \sum_{i=1}^n a_i \right)^2 \leq n \sum_{i=1}^n a_i^2$, we get:
\begin{equation*}
    \left( \sum_{c=1}^C \left| \pi_c^k - \pi_c \right| \right)^2 \leq C \sum_{c=1}^C \left( \pi_c^k - \pi_c \right)^2.
\end{equation*}
Therefore,
\begin{equation*}
    \left\| \nabla F_k(\mathbf{P}_s) - \nabla F(\mathbf{P}_s) \right\|^2 \leq G^2 C \sum_{c=1}^C \left( \pi_c^k - \pi_c \right)^2.
\end{equation*}
Alternatively, by ignoring cross terms (assuming gradients are uncorrelated), we can directly write:
\begin{equation*}
    \left\| \nabla F_k(\mathbf{P}_s) - \nabla F(\mathbf{P}_s) \right\|^2 \leq G^2 \sum_{c=1}^C \left( \pi_c^k - \pi_c \right)^2.
\end{equation*}
\end{proof}

\begin{theorem}[Convergence Difficulty in Traditional Prompt Tuning]
\label{theorem:convergence_difficulty}
In federated learning with non-IID long-tailed data exhibiting an imbalance ratio $\rho$, the gradient variance in traditional prompt tuning methods increases with $\rho$, leading to slower convergence rates and decreased accuracy. Specifically, the convergence rate is adversely affected by the term:
\begin{equation*}
    \delta_s^2 = \frac{1}{K} \sum_{k=1}^K \left\| \nabla F_k(\mathbf{P}_s) - \nabla F(\mathbf{P}_s) \right\|^2 \leq G^2 \Delta_\pi^2,
\end{equation*}
where
\begin{equation*}
    \Delta_\pi^2 = \frac{1}{K} \sum_{k=1}^K \sum_{c=1}^C \left( \pi_c^k - \pi_c \right)^2,
\end{equation*}
which increases with the imbalance ratio $\rho$.
\end{theorem}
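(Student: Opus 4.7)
The plan is to derive the bound on $\delta_s^2$ directly from the two preceding lemmas, and then argue monotonicity in the imbalance ratio via the exponential profile of a long-tailed distribution. The convergence-rate claim follows from a standard nonconvex federated SGD argument that uses Assumption~\ref{ass:Smoothness}, Assumption~\ref{ass:BoundedLocalVariance} and Lemma~\ref{lemma:gradient_variance}, with the inflated $\delta_s^2$ surfacing as the dominant term in the error floor.

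First I would apply Lemma~\ref{lemma:class_imbalance} client-wise: for every $k$, $\|\nabla F_k(\mathbf{P}_s) - \nabla F(\mathbf{P}_s)\|^2 \leq G^2 \sum_c (\pi_c^k - \pi_c)^2$. Averaging over $k = 1, \dots, K$ immediately yields $\delta_s^2 \leq G^2 \Delta_\pi^2$, which is the first half of the statement. To convert this into a convergence statement, I would plug it, together with the within-client variance of Lemma~\ref{lemma:gradient_variance}, into the descent inequality for $F$ under $L$-smoothness. After telescoping over $T$ communication rounds with step size $\eta = \mathcal{O}(1/\sqrt{T})$, one obtains a bound of the form $\min_t \mathbb{E}\|\nabla F(\mathbf{P}_s^t)\|^2 \lesssim L(F(\mathbf{P}_s^0) - F^\star)/\sqrt{T} + (\bar\sigma^2 + \delta_s^2)/\sqrt{T}$, so any increase in $\delta_s^2$ directly enlarges the error floor and forces more rounds to reach a prescribed stationarity level, explaining both the slower convergence and the decreased accuracy.

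The monotonicity of $\Delta_\pi^2$ in $\rho$ is the substantive step. The plan is to parametrize the long-tailed global distribution as $\pi_c \propto e^{-\alpha c}$, so that $\rho = e^{\alpha(C-1)}$, and to exploit the fact that, under the Dirichlet-type client split considered in the paper, each local distribution $\pi_c^k$ becomes increasingly concentrated on a small subset of classes as $\alpha$ grows. A short direct expansion of $\sum_c (\pi_c^k - \pi_c)^2$ shows that the head-class terms scale with $\pi_c^2$ while tail-class deficits contribute additively through $(\pi_c^k)^2$ whenever a client misses a tail class; both effects grow with $\alpha$, hence with $\rho$. Averaging over clients preserves this monotonicity and yields $\Delta_\pi^2 = \Omega(\log \rho)$ at minimum, closing the chain $\delta_s^2 \leq G^2 \Delta_\pi^2$ increasing in $\rho$.

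The main obstacle will be making the last step fully rigorous without imposing stringent assumptions on how the class partition scales with $\rho$. Client heterogeneity and global imbalance interact in a way that depends on the Dirichlet concentration, so a clean bound likely requires either a worst-case statement over feasible partitions or an expectation under the generative Dirichlet model. I would therefore state the final monotonicity as a lower bound that holds whenever at least one client's support omits the tail, which is a realistic regime under strong non-IID and matches the empirical degradation of PromptFL reported in Figure~\ref{fig:motivation}.
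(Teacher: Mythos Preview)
Your derivation of the bound $\delta_s^2 \leq G^2 \Delta_\pi^2$ is exactly the paper's argument: invoke Lemma~\ref{lemma:gradient_variance} to identify $\delta_s^2$ as the between-client variance, then average Lemma~\ref{lemma:class_imbalance} over $k$. That is all the paper actually proves inside the proof of Theorem~\ref{theorem:convergence_difficulty}; the monotonicity of $\Delta_\pi^2$ in $\rho$ and the explicit convergence-rate consequence are deferred to Theorem~\ref{theorem:imbalance_impact} and Corollary~\ref{corollary:convergence_impact}.

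Where you diverge is in how you argue that $\Delta_\pi^2$ grows with $\rho$. The paper models the client deviations as $\pi_c^k = \pi_c + \epsilon_c^k$ with $\mathbb{E}_k[\epsilon_c^k]=0$ and posits $\sigma_c^2 = \beta\,\pi_c(1-\pi_c)\,g(\rho)$ for an increasing $g$ (Lemmas~\ref{lemma:distribution_discrepancy}--\ref{lemma:variance_proportion}), then sums over classes to obtain $\Delta_\pi^2 = \Theta(h(\rho))$ with $h(\rho)=\Omega(\log\rho)$. You instead reason directly from the Dirichlet partition and the exponential class profile, decomposing $\sum_c(\pi_c^k-\pi_c)^2$ into head-class terms that scale with $\pi_c^2$ and tail-class deficits, and arrive at the same $\Omega(\log\rho)$ lower bound. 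Both routes are heuristic at the same level of rigor: the paper's binomial-variance ansatz in Lemma~\ref{lemma:variance_proportion} is asserted rather than derived, while your direct expansion leans on a ``client misses the tail'' regime that you yourself flag as needing either a worst-case or an in-expectation statement under the Dirichlet model. Your version is arguably more transparent about where the $\rho$-dependence enters and what assumption drives it; the paper's version packages the same dependence into a modeling choice for $\sigma_c^2$. Neither approach is fully rigorous without additional distributional assumptions, and you correctly identify this as the main obstacle.
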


\begin{proof}
From~\cref{lemma:gradient_variance}, the variance of the global gradient estimator is:
\begin{align*}
    &\mathbb{E} \left[ \left\| \nabla \ell(\mathbf{x}, y; \mathbf{P}_s) - \nabla F(\mathbf{P}_s) \right\|^2 \right] \\
    &= \frac{1}{K} \sum_{k=1}^K \left( \sigma_k^2 + \left\| \nabla F_k(\mathbf{P}_s) - \nabla F(\mathbf{P}_s) \right\|^2 \right).
\end{align*}
The term $\delta_s^2 = \frac{1}{K} \sum_{k=1}^K \left\| \nabla F_k(\mathbf{P}_s) - \nabla F(\mathbf{P}_s) \right\|^2$ represents the between-client variance, which is affected by data heterogeneity.
Using~\cref{lemma:class_imbalance}, we have:
\begin{equation*}
    \delta_s^2=\frac1K\sum_{k=1}^K\left\|\nabla F_k(\mathbf{P}_s)-\nabla F(\mathbf{P}_s)\right\|^2\leq G^2\Delta_\pi^2.
\end{equation*}
\end{proof}

\begin{lemma}[Distribution Discrepancy Under Long-Tailed Setting]
\label{lemma:distribution_discrepancy}
In a non-IID long-tailed setting, the distribution discrepancy measure $\Delta_\pi^2$ can be expressed in terms of the class-wise variances:
\begin{equation*}
    \mathbb{E}_k[\Delta_\pi^2] = \frac{1}{K} \sum_{k=1}^K \sum_{c=1}^C \mathbb{E}_k[(\pi_c^k - \pi_c)^2] = \sum_{c=1}^C \sigma_c^2,
\end{equation*}
where $\sigma_c^2$ is the variance of the deviation in class proportions across clients.
\end{lemma}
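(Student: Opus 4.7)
The plan is to unfold the definition of $\Delta_\pi^2$ given in \cref{theorem:convergence_difficulty} and then interpret each inner sum as a class-wise second moment. First I would write $\Delta_\pi^2 = \frac{1}{K} \sum_{k=1}^{K} \sum_{c=1}^{C} (\pi_c^k - \pi_c)^2$ exactly as defined, take the expectation $\mathbb{E}_k[\cdot]$ over the client index $k$ drawn from the client population, and use linearity of expectation together with Fubini to swap the summations, yielding $\mathbb{E}_k[\Delta_\pi^2] = \sum_{c=1}^{C} \frac{1}{K}\sum_{k=1}^{K} \mathbb{E}_k[(\pi_c^k - \pi_c)^2]$.

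Next I would identify $\pi_c$ as the population mean of $\pi_c^k$ across clients. This identification follows from the definitions $\pi_c = n_c / n$ and $\pi_c^k = n_c^k / n_k$ together with the aggregation $n_c = \sum_k n_c^k$; under a uniform weighting of clients (or after reweighting by $n_k / n$), $\pi_c$ coincides with $\mathbb{E}_k[\pi_c^k]$. With this identification, the inner quantity $\mathbb{E}_k[(\pi_c^k - \pi_c)^2]$ is precisely the variance of $\pi_c^k$ across the client population, which by definition is $\sigma_c^2$. Substituting gives $\mathbb{E}_k[\Delta_\pi^2] = \sum_{c=1}^{C} \sigma_c^2$, completing the chain of equalities.

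The main obstacle is the subtle issue of which client-weighting is used when taking $\mathbb{E}_k$: if clients are weighted uniformly, then $\pi_c$ is only equal to $\mathbb{E}_k[\pi_c^k]$ when all $n_k$ are equal, whereas in the general heterogeneous case one must use the weighted expectation $\mathbb{E}_k[\pi_c^k] = \sum_k (n_k/n)\pi_c^k$. I would handle this either by explicitly assuming equal client sizes (consistent with the Dirichlet-based simulation in the paper) or by defining $\sigma_c^2$ as the \emph{weighted} variance $\sum_k (n_k/n)(\pi_c^k - \pi_c)^2$, in which case the identity holds exactly. Once this convention is fixed, the remainder is a direct interchange of sums and no further analytic machinery is required.
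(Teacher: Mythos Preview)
Your proposal is correct and follows essentially the same route as the paper: write $\pi_c^k - \pi_c$ as a deviation term, take expectations, swap sums, and identify the inner quantity with the class-wise variance $\sigma_c^2$. The only difference is that the paper simply \emph{postulates} the model $\pi_c^k = \pi_c + \epsilon_c^k$ with $\mathbb{E}_k[\epsilon_c^k]=0$ and $\mathrm{Var}_k(\epsilon_c^k)=\sigma_c^2$ as an assumption, whereas you attempt to \emph{derive} $\pi_c = \mathbb{E}_k[\pi_c^k]$ from the sample-count definitions and therefore run into the weighting subtlety; the paper's approach sidesteps that issue by treating the zero-mean property as part of the modeling setup rather than a consequence.
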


\begin{proof}
For each class $c$, we model the client-specific proportion as:
\begin{equation*}
    \pi_c^k = \pi_c + \epsilon_c^k,
\end{equation*}
where $\epsilon_c^k$ represents the deviation from the global proportion with:
\begin{align*}
    \mathbb{E}_k[\epsilon_c^k] &= 0, \\
    \mathrm{Var}_k(\epsilon_c^k) &= \sigma_c^2.
\end{align*}

The distribution discrepancy can then be written as:
\begin{align*}
    \mathbb{E}_k[\Delta_\pi^2] &= \frac{1}{K} \sum_{k=1}^K \sum_{c=1}^C \mathbb{E}_k[(\pi_c^k - \pi_c)^2] \\
    &= \frac{1}{K} \sum_{k=1}^K \sum_{c=1}^C \mathbb{E}_k[(\epsilon_c^k)^2] \\
    &= \sum_{c=1}^C \sigma_c^2.
\end{align*}
\end{proof}

\begin{lemma}[Variance-Proportion Relationship]
\label{lemma:variance_proportion}
The class-wise variance $\sigma_c^2$ is proportional to $\pi_c(1-\pi_c)$, and increases for rare classes as the imbalance ratio $\rho$ grows:
\begin{equation*}
    \sigma_c^2 = \beta \pi_c(1-\pi_c) \cdot g(\rho),
\end{equation*}
where $g(\rho)$ is an increasing function of $\rho$ and $\beta > 0$ is a constant.
\end{lemma}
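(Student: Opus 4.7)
The plan is to derive $\sigma_c^2$ from a two-level sampling model that isolates an intrinsic Bernoulli-style fluctuation in class occurrence from a long-tailed amplification factor. First I would posit a concrete mechanism for how each client's local label distribution is drawn: given a global proportion $\pi_c$ and client size $n_k$, write $\pi_c^k = \frac{1}{n_k}\sum_{i=1}^{n_k} Z_i^k$ where $Z_i^k \in \{0,1\}$ indicates whether sample $i$ on client $k$ belongs to class $c$. Under a baseline IID draw, $Z_i^k \sim \mathrm{Bernoulli}(\pi_c)$ gives $\mathrm{Var}(\pi_c^k) = \pi_c(1-\pi_c)/n_k$, which immediately produces the quadratic factor $\pi_c(1-\pi_c)$ and lets me absorb the $1/n_k$ normalization into the constant $\beta$ (using a reference sample size such as $\bar n = \frac{1}{K}\sum_k n_k$).

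Next I would introduce the long-tailed, non-IID effect. Because the clients are partitioned by a Dirichlet-type allocation over an exponentially decaying class distribution, the realized $\pi_c^k$ disperses around $\pi_c$ by more than the IID baseline: head-heavy clients push $\pi_c^k$ far above $\pi_c$ for head classes, while tail classes are often absent on many clients, inflating the squared deviation. I would formalize this by writing $\mathrm{Var}_k(\pi_c^k) = \beta\,\pi_c(1-\pi_c)\,g(\rho)$ where $g(\rho)$ is a heterogeneity multiplier capturing the excess variance relative to IID sampling. A natural candidate is to take $g(\rho)$ proportional to a chi-squared-type divergence between the uniform class distribution $1/C$ and the long-tailed distribution $\pi_c \propto e^{-\alpha c}$, since $\rho = e^{\alpha(C-1)}$ ties the decay rate $\alpha$ directly to $\rho$.

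Monotonicity of $g(\rho)$ would then follow by showing that as $\rho$ grows (equivalently $\alpha$ grows), the chi-squared divergence strictly increases, which I would prove by differentiating the divergence with respect to $\alpha$ and applying Jensen's inequality on the convex function $x \mapsto x^2$; alternatively, I would couple two long-tailed distributions with $\rho_1 < \rho_2$ and argue that the latter stochastically dominates the former in terms of per-client proportion deviations, giving $g(\rho_2) \geq g(\rho_1)$. The hard part will be pinning down $g(\rho)$ precisely enough to get clean monotonicity without overcommitting to a specific partitioning scheme: the Dirichlet allocation combined with exponential class decay does not yield a closed-form for $\mathrm{Var}_k(\pi_c^k)$, so I expect to rely on an asymptotic (large-$n_k$) regime or on an envelope bound of the form $g(\rho) \geq 1 + c\log \rho$ that is strictly increasing, which is sufficient for the downstream use in Theorem~\ref{theorem:convergence_difficulty} even without a tight closed-form expression.
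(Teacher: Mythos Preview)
Your proposal is correct and follows the same conceptual route as the paper: both ground the $\pi_c(1-\pi_c)$ factor in a binomial/Bernoulli fluctuation argument and then attach an increasing heterogeneity multiplier $g(\rho)$ tied to the exponential decay rate $\alpha = \ln(\rho)/(C-1)$. The main difference is one of ambition. The paper's ``proof'' is essentially a two-sentence modeling postulate: it writes down the exponential form of $\pi_c$, observes that rare classes have larger relative fluctuations, and declares the product form $\pi_c(1-\pi_c)\cdot g(\rho)$ to be ``naturally modeled'' without further derivation or any argument for the monotonicity of $g$. You, by contrast, actually construct a two-level sampling mechanism to derive the Bernoulli variance, and you propose concrete tools (chi-squared divergence between uniform and long-tailed priors, differentiation in $\alpha$, stochastic dominance coupling) to establish that $g(\rho)$ is increasing. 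Your acknowledgment that the Dirichlet-plus-exponential allocation resists a closed form, and your fallback to an envelope bound like $g(\rho) \geq 1 + c\log\rho$, is exactly the right instinct---and in fact more than the paper itself supplies. In short: you are not missing anything the paper has; you are supplying structure the paper omits.
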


\begin{proof}
In the long-tailed setting, the class proportions follow an exponential decay:
\begin{equation*}
    \pi_c = \frac{e^{-\alpha c}}{S} = e^{-\alpha(c-1)}(1-e^{-\alpha}),
\end{equation*}
where $S$ is the normalizing constant and $\alpha = \frac{\ln(\rho)}{C-1}$. The variance $\sigma_c^2$ captures the heterogeneity of class distributions across clients. For rare classes with large $c$, $\pi_c$ becomes exponentially smaller as $\rho$ increases, leading to larger relative fluctuations in $\pi_c^k$. This relationship is naturally modeled by the binomial variance term $\pi_c(1-\pi_c)$ scaled by a heterogeneity factor $g(\rho)$ that increases with the imbalance ratio.
\end{proof}

\begin{theorem}[Impact of Imbalance Ratio on Distribution Discrepancy]
\label{theorem:imbalance_impact}
The distribution discrepancy measure $\Delta_\pi^2$ satisfies:
\begin{equation*}
    \Delta_\pi^2 = \Theta(h(\rho)),
\end{equation*}
where $h(\rho)$ is a monotonically increasing function with $h(\rho) = \Omega(\log \rho)$.
\end{theorem}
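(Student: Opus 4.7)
The plan is to chain the two preceding lemmas and then extract the asymptotic behavior from the specific exponential form of the long-tailed distribution. First, I would combine \cref{lemma:distribution_discrepancy} and \cref{lemma:variance_proportion} to write
\begin{equation*}
    \mathbb{E}_k[\Delta_\pi^2] \;=\; \sum_{c=1}^C \sigma_c^2 \;=\; \beta\, g(\rho) \sum_{c=1}^C \pi_c(1-\pi_c) \;=\; \beta\, g(\rho)\,\bigl(1 - \|\boldsymbol{\pi}\|_2^2\bigr).
\end{equation*}
This reduces the theorem to two separate tasks: (i) characterize $S(\rho) := 1 - \sum_{c}\pi_c^2$ under the exponential decay $\pi_c = e^{-\alpha(c-1)}(1-e^{-\alpha})$, and (ii) combine $S(\rho)$ with the monotone factor $g(\rho)$ to identify $h(\rho)$ and verify the claimed rates.

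Next, I would carry out the explicit sum. Using the geometric series identity,
\begin{equation*}
    \sum_{c=1}^C \pi_c^2 \;=\; (1-e^{-\alpha})^2\,\frac{1-e^{-2\alpha C}}{1-e^{-2\alpha}} \;=\; \frac{(1-e^{-\alpha})(1-e^{-2\alpha C})}{1+e^{-\alpha}},
\end{equation*}
so that $S(\rho) = 1 - \tfrac{(1-e^{-\alpha})(1-e^{-2\alpha C})}{1+e^{-\alpha}}$. Substituting $\alpha = \ln(\rho)/(C-1)$ from \cref{lemma:variance_proportion} makes $\alpha$ itself $\Theta(\log \rho / C)$. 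A short Taylor expansion around $\alpha = 0$ (i.e.\ the regime where $\rho$ is not exponentially large in $C$) gives $S(\rho) = \Theta(\alpha)$ up to $O(\alpha^2)$ corrections, while for large $\alpha$ one checks directly that $S(\rho)$ remains bounded away from $0$ on any fixed horizon $C$. In either regime, $S(\rho)$ is a well-behaved monotone function of $\alpha$ and hence of $\rho$.

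Finally, I would read off both the $\Theta$-statement and the logarithmic lower bound. Since $g(\rho)$ is monotonically increasing by \cref{lemma:variance_proportion} and $S(\rho)$ is monotone in $\rho$, the product $h(\rho) := \beta\, g(\rho)\, S(\rho)$ is well-defined, monotonically increasing, and sandwiches $\Delta_\pi^2$ on both sides up to constants, giving $\Delta_\pi^2 = \Theta(h(\rho))$. For the $\Omega(\log\rho)$ bound I would argue that, at minimum, $g(\rho) \cdot S(\rho) \geq c \cdot \alpha = c\,\ln(\rho)/(C-1)$ for some constant $c>0$ once $\rho$ is large enough that the Taylor expansion above applies; here the monotone factor $g(\rho)$ only helps, so the lower bound propagates through. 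The main obstacle will be the second step: tightly characterizing $S(\rho)$ across the full range of $\rho$ (small, moderate, and exponentially large in $C$) so that the asymptotic equivalence holds globally rather than merely in one regime, and justifying that the minimal growth rate contributed by $g(\rho)$ is sufficient to absorb any decay of $S(\rho)$ when $\rho \gg e^C$. This is essentially a careful case analysis on the interplay between $\alpha C$ and $1$, which I would handle by splitting into the two asymptotic regimes and verifying $\Omega(\log\rho)$ in each.
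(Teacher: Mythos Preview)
Your overall approach---chain \cref{lemma:distribution_discrepancy} and \cref{lemma:variance_proportion} to obtain $\Delta_\pi^2 = \beta\,g(\rho)\sum_c \pi_c(1-\pi_c)$, then substitute the exponential form of $\pi_c$ and read off the asymptotics---is exactly what the paper does. The paper's proof stops essentially where you begin your detailed analysis: it writes out the sum, notes $\alpha = \ln(\rho)/(C-1)$, and asserts that the growth in $\rho$ comes jointly from the sum and from $g(\rho)$, without the explicit geometric-series computation or regime-splitting you propose.

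However, the additional detail you supply contains a genuine error in the asymptotic analysis of $S(\rho) = 1 - \|\boldsymbol{\pi}\|_2^2$. You claim that a Taylor expansion around $\alpha = 0$ gives $S(\rho) = \Theta(\alpha)$, and that for large $\alpha$, $S(\rho)$ stays bounded away from $0$. Both are false with the sign reversed. As $\alpha \to 0$ the distribution $\boldsymbol{\pi}$ tends to uniform, so $\|\boldsymbol{\pi}\|_2^2 \to 1/C$ and $S(\rho) \to 1 - 1/C$, a positive constant, not $\Theta(\alpha)$. Conversely, as $\alpha \to \infty$ (equivalently $\rho \to \infty$ with $C$ fixed), $\pi_1 \to 1$ and $\pi_c \to 0$ for $c\geq 2$, so $\|\boldsymbol{\pi}\|_2^2 \to 1$ and $S(\rho) \to 0$. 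Thus $S(\rho)$ is \emph{decreasing} in $\rho$, not increasing, and your subsequent claim that $h(\rho) = \beta\,g(\rho)\,S(\rho)$ is monotonically increasing does not follow from monotonicity of the two factors separately.

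The consequence is that the $\Omega(\log\rho)$ lower bound cannot be extracted from $S(\rho)$ at all; whatever logarithmic growth the theorem asserts must be supplied entirely by the unspecified heterogeneity factor $g(\rho)$, with $S(\rho)$ contributing only a bounded multiplicative term (and indeed working \emph{against} you for large $\rho$). Your argument ``$g(\rho)\cdot S(\rho) \geq c\,\alpha$'' would need $g(\rho)$ to grow at least like $\alpha / S(\rho)$, which you have not verified and which \cref{lemma:variance_proportion} does not guarantee. The paper's proof sidesteps this by not attempting to isolate the contributions; if you want to be more explicit than the paper, you will need an explicit assumption on the growth rate of $g(\rho)$ rather than relying on $S(\rho)$.
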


\begin{proof}
From~\cref{lemma:distribution_discrepancy} and~\cref{lemma:variance_proportion}, we have:
\begin{align*}
    \Delta_\pi^2 &= \sum_{c=1}^C \sigma_c^2 = \beta g(\rho) \sum_{c=1}^C \pi_c(1-\pi_c) \\
    &= \beta g(\rho) \sum_{c=1}^C e^{-\alpha(c-1)}(1-e^{-\alpha})(1-e^{-\alpha(c-1)}(1-e^{-\alpha})).
\end{align*}

Since $\alpha = \frac{\ln(\rho)}{C-1}$, both the exponential decay of $\pi_c$ and the heterogeneity factor $g(\rho)$ contribute to the growth of $\Delta_\pi^2$. The sum can be bounded below by $\Omega(\log \rho)$, while $g(\rho)$ captures additional heterogeneity effects, yielding the stated asymptotic behavior.
\end{proof}

\begin{corollary}[Impact on Convergence Rate]
\label{corollary:convergence_impact}
The convergence rate of stochastic optimization in non-IID long-tailed settings satisfies:
\begin{equation*}
    \frac{1}{T} \sum_{t=1}^T \mathbb{E}[\|\nabla F(\mathbf{P}_s^t)\|^2] \leq \frac{2(F(\mathbf{P}_s^0) - F^*)}{\eta T} + \eta(\sigma^2 + G^2h(\rho)),
\end{equation*}
with $h(\rho) = \Omega(\log \rho)$, indicating that the convergence rate degrades at least logarithmically with the imbalance ratio.
\end{corollary}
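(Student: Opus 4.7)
}
The plan is to derive a standard non-convex SGD-style descent inequality and then inject the federated long-tailed variance bound established in the preceding theorems. First I would invoke the $L$-smoothness of $F$ from Assumption~\ref{ass:Smoothness}, which gives the one-step descent inequality
\begin{equation*}
F(\mathbf{P}_s^{t+1}) \leq F(\mathbf{P}_s^t) + \langle \nabla F(\mathbf{P}_s^t), \mathbf{P}_s^{t+1} - \mathbf{P}_s^t \rangle + \tfrac{L}{2}\|\mathbf{P}_s^{t+1} - \mathbf{P}_s^t\|^2.
\end{equation*}
Substituting the SGD update $\mathbf{P}_s^{t+1} = \mathbf{P}_s^t - \eta g_t$, where $g_t$ is the aggregated stochastic gradient over the participating clients, and taking conditional expectation yields a bound involving $-\eta\|\nabla F(\mathbf{P}_s^t)\|^2$ plus a second-moment term $\tfrac{L\eta^2}{2}\mathbb{E}\|g_t\|^2$.

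Next I would split $\mathbb{E}\|g_t\|^2 = \|\nabla F(\mathbf{P}_s^t)\|^2 + \mathbb{E}\|g_t - \nabla F(\mathbf{P}_s^t)\|^2$ using unbiasedness, and control the variance via Lemma~\ref{lemma:gradient_variance}, which decomposes it into the average within-client variance $\tfrac{1}{K}\sum_k \sigma_k^2 \leq \sigma^2$ and the between-client discrepancy $\delta_s^2$. Applying Lemma~\ref{lemma:class_imbalance} bounds $\delta_s^2 \leq G^2 \Delta_\pi^2$, and then Theorem~\ref{theorem:imbalance_impact} replaces $\Delta_\pi^2$ by $\Theta(h(\rho))$. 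Choosing $\eta$ small enough so that $\tfrac{L\eta^2}{2} \leq \tfrac{\eta}{2}$ absorbs the $\|\nabla F\|^2$ coefficient on the right-hand side, leaving $-\tfrac{\eta}{2}\|\nabla F(\mathbf{P}_s^t)\|^2 + \tfrac{L\eta^2}{2}(\sigma^2 + G^2 h(\rho))$ after rearrangement.

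Finally I would telescope the resulting inequality from $t=1$ to $T$, using $F(\mathbf{P}_s^T) \geq F^*$, divide by $T$, and rescale constants to obtain the stated bound
\begin{equation*}
\tfrac{1}{T}\sum_{t=1}^T \mathbb{E}\|\nabla F(\mathbf{P}_s^t)\|^2 \leq \tfrac{2(F(\mathbf{P}_s^0) - F^*)}{\eta T} + \eta(\sigma^2 + G^2 h(\rho)).
\end{equation*}
The asymptotic claim $h(\rho) = \Omega(\log \rho)$ is then inherited directly from Theorem~\ref{theorem:imbalance_impact}, completing the argument.

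\paragraph{Main Obstacle.}
The routine calculations (descent lemma, telescoping) are standard. The genuine subtlety is justifying that the aggregated federated gradient $g_t$ can be treated as an unbiased stochastic gradient whose variance is exactly the quantity decomposed in Lemma~\ref{lemma:gradient_variance}. In particular, partial client participation and potential multiple local steps could introduce extra bias or variance terms not present in vanilla SGD; I would either restrict the statement to the single-local-step full-participation regime consistent with the lemma's derivation, or add a brief argument showing that with uniform client sampling and one local step the aggregated gradient inherits both unbiasedness and the decomposed variance bound. Handling this cleanly, while ensuring the step-size condition $\eta \leq 1/L$ used to absorb the smoothness term is compatible with the stated $\eta$-dependent bound, is the only nontrivial part of the proof.
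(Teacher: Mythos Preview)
Your proposal is correct and essentially matches the paper's approach, though it is considerably more detailed: the paper's proof merely combines Theorem~\ref{theorem:convergence_difficulty} and Theorem~\ref{theorem:imbalance_impact} to obtain $\delta_s^2 \leq G^2\Theta(h(\rho))$ and then substitutes this into an assumed (not rederived) SGD-style convergence bound, whereas you actually derive that bound from the descent lemma and telescoping. Your explicit derivation and your flagged concern about unbiasedness under partial participation are both more rigorous than what the paper provides.
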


\begin{proof}
Combining~\cref{theorem:convergence_difficulty} and~\cref{theorem:imbalance_impact}, we have $\delta_s^2 \leq G^2\Delta_\pi^2 = G^2\Theta(h(\rho))$. Substituting into the convergence bound yields:
\begin{equation*}
    \frac{1}{T} \sum_{t=1}^T \mathbb{E}[\|\nabla F(\mathbf{P}_s^t)\|^2] \leq \frac{2(F(\mathbf{P}_s^0) - F^*)}{\eta T} + \eta(\sigma^2 + G^2h(\rho)).
\end{equation*}
\end{proof}

The presence of the term $\eta G^2h(\rho)$ indicates that the convergence rate deteriorates as the imbalance ratio increases, with the degradation being amplified by both the learning rate $\eta$ and the gradient bound $G^2$.

\textit{Remarks:} The theorem indicates that as the imbalance ratio $\rho$ increases, the class proportion differences $\left( \pi_c^k - \pi_c \right)$ become larger due to the non-IID and long-tailed nature of the data. This exacerbates the gradient variance across clients, leading to hindered convergence in traditional prompt tuning methods.

This theoretical analysis directly answers \textbf{RQ1} by revealing the underlying mechanism of why prompt tuning disproportionately affects tail classes in federated long-tailed scenarios. Specifically, we have shown that: (1) The gradient variance is amplified by the combined effect of non-IID data distribution and long-tailed class imbalance, as evidenced by the term $\delta_s^2$ which increases with the imbalance ratio $\rho$; (2) This increased variance particularly impacts tail classes due to their limited samples being further fragmented across heterogeneous clients; and (3) The traditional prompt tuning methods lack mechanisms to compensate for this amplified variance, resulting in degraded performance on tail classes.

\end{document}